\newcommand{\deltat}{{\delta\hspace{-.1em}t}}
\newcommand{\dbt}{d\hspace{-.1em}B_t}
\newcommand{\reward}{{r}}
\newcommand{\E}{\mathbb{E}}
\newcommand{\BP}{\mathbb{P}}
\newcommand{\bigO}{O}
\newcommand{\smallo}{o}
\newcommand{\mathdef}{\colon\hspace{-.6em}=}
\newcommand{\deq}{\mathrel{\mathop{:}}=}
\newcommand{\bb}[1]{\mathbb{#1}}
\newcommand{\from}{\colon}
\newcommand{\eps}{\varepsilon}
\DeclareMathOperator*{\argmax}{argmax}
\icmltitlerunning{Making Deep Q-learning methods robust to time discretization}
\theoremstyle{plain}
\newtheorem{theorem}{Theorem}
\newtheorem{lemma}{Lemma}
\begin{document}

\twocolumn[
\icmltitle{Making Deep Q-learning Methods Robust to Time Discretization}



\icmlsetsymbol{equal}{*}

\begin{icmlauthorlist}
	\icmlauthor{Corentin Tallec}{tau}
	\icmlauthor{L\'eonard Blier}{tau,fair}
	\icmlauthor{Yann Ollivier}{fair}
\end{icmlauthorlist}

\icmlaffiliation{tau}{TAckling the Underspecified, Universit\'e Paris Sud}
\icmlaffiliation{fair}{Facebook Artificial Intelligence Research}
%
\icmlcorrespondingauthor{Corentin Tallec}{corentin.tallec@u-psud.fr}
\icmlcorrespondingauthor{L\'eonard Blier}{leonardb@fb.com}

\icmlkeywords{Reinforcement learning}

\vskip 0.3in
]



\printAffiliationsAndNotice{}  

\begin{abstract}
Despite remarkable successes, \emph{Deep Reinforcement Learning}
(DRL) is not robust to hyperparameterization, implementation
details, or small environment changes (\citealt{drl_matter},
\citealt{drl_matter_bis}).  Overcoming such sensitivity is key to
making DRL applicable to real world problems.  In this paper,
we identify
sensitivity to \emph{time discretization} in near
continuous-time environments as a critical factor; this covers, e.g., changing the number of
frames per second, or the action frequency of the controller.
Empirically, we find that
$Q$-learning-based approaches such as
\emph{Deep $Q$-learning}~\cite{dqn} and \emph{Deep Deterministic
Policy Gradient}~\cite{ddpg} 
collapse with small time steps. Formally, we prove that
$Q$-learning does not exist in
continuous time.
We detail a principled way to build an
off-policy RL algorithm that yields similar performances over a wide range
of time
discretizations, and confirm this robustness empirically. 
\end{abstract}


\section{Introduction}
\label{sec:intro}
In recent years, \emph{Deep Reinforcement Learning} (DRL) approaches have
provided impressive results in a variety of domains, achieving superhuman
performance with no expert knowledge in perfect information zero-sum
games~\cite{alphazero}, reaching top player level in video
games~(\citealt{openai_five}, \citealt{dqn}), or learning dexterous manipulation
from scratch without demonstrations~\cite{hand_control}. 
In spite of those successes, DRL approaches are
sensitive to a number of factors, including hyperparameterization,
implementation details or small changes in the environment
parameters~(\citealt{drl_matter}, \citealt{drl_matter_bis}). This sensitivity,
along with sample inefficiency, largely prevents DRL from being applied in real
world settings. Notably, high sensitivity to environment parameters prevents
transfer from imperfect simulators to real world scenarios.

In this paper we focus on the sensitivity to time discretization of DRL
approaches, such as what happens when an agent receives $50$ observations
and is expected to take $50$ actions per second instead of $10$.
In principle, decreasing time discretization, or equivalently
shortening reaction time, should only improve agent performance.
Robustness to time discretization is especially relevant in
\emph{near-continuous} environments, which includes most continuous
control environments, robotics, and many video games.  

Standard approaches based on estimation of state-action
value functions, such as
\emph{Deep $Q$-learning} (DQN,~\citealt{dqn}) and \emph{Deep deterministic policy
gradient} (DDPG,~\citealt{ddpg}) are not at all robust to changes in time discretization. This is shown experimentally in Sec.~\ref{sec:exp}. 
Intuitively, as the discretization timestep
decreases, the effect of individual actions on the total return decreases
too: $Q^*(s,a)$ is the value of playing action $a$ then playing optimally,
and if $a$ is only maintained for a very short time its advantage over
other actions will be accordingly small. (This occurs even with a
suitably adjusted decay factor $\gamma$.)
If the discretization timestep becomes infinitesimal, the effect of every
individual action vanishes: there is no continuous-time $Q$-function
(Thm.~\ref{th:q-cont}), hence the poor performance of $Q$-learning with
small time steps. These statements can be fully formalized in the framework
of continuous-time reinforcement learning (Sec.~\ref{sec:framework})
\cite{cont_rl, adv_upd}. 

We focus on continuous time because this leads to a clear theoretical framework,
but our observations make sense in any setting in which the value
results from taking a large number of small individual actions. Our
results suggest standard $Q$-learning will fail in such settings without a
delicate balance of hyperparameter scalings and reparameterizations.

We are looking for an algorithm that would be as invariant as possible to
changing the discretization timestep. Such an 
algorithm should remain viable when this timestep is small, and in particular admit a
continuous-time
limit when the discretization timestep goes to $0$.  This
leads to precise design choices in term of agent architecture, exploration
policy, and learning rates scalings.  The resulting algorithm is shown to
provide better invariance to time discretization than
vanilla DQN or DDPG, on many environments (Sec.~\ref{sec:exp}).  On a new environment, as soon as the
order of magnitude of the time discretization is known, our analysis readily
provides relevant scalings for a number of hyperparameters.

Our contribution is threefold:
\begin{itemize} 
\item Building on \cite{adv_upd}, we formally show that the $Q$-function collapses to the $V$-function in near-continuous time, and thus that
    standard $Q$-learning is ill-behaved in this setting.
  \item Our analysis of properties in the continuous-time limit leads to a robust off-policy algorithm. In particular,
    it provides insights on architecture design, and constrains
    exploration schemes and learning rates scalings.
  \item We empirically show that standard $Q$-learning methods are not
  robust to changes in time discretization, exhibiting degraded
  performance, while our algorithm demonstrates
  substantial robustness.
\end{itemize}


\section{Related Work}
\label{sec:related}

Our approach builds on \cite{adv_upd},  who identified the collapse of
$Q$-learning for small time steps and, as a solution, suggested the Advantage
Updating algorithm, with proper scalings for the $V$ and
advantage parts depending on timescale $\deltat$; testing was only done on a quadratic-linear problem.

We expand on \cite{adv_upd} in several directions. First, we modify the
algorithm by
using a different normalization step for $A$, which forgoes the need to
learn the normalization itself, thanks to the parameterization
\eqref{eq:Aparam}. Second, we test Advantage Updating for the first time on a
variety on RL environments using deep networks, establishing Deep
Advantage Updating as a viable algorithm in this setting.  Third,
we provide formal proofs in a general setting for the collapse of
$Q$-learning when the timescale $\deltat$ tends to $0$,
and for the non-collapse of Advantage Updating with the proper scalings.
Fourth, we also discuss how to obtain $\deltat$-invariant exploration. Fifth, we
provide stringent experimental tests of the actual robustness to changing
$\deltat$.

Our study focuses on off-policy algorithms. Some on-policy algorithms,
such as A3C~\cite{a3c}, PPO~\cite{ppo} or TRPO~\cite{trpo} may be time discretization
invariant with specific setups. This is out of the scope of our work and
would require a separate study.

\cite{dueling_nets} also use a parameterization separating
the value and advantage components of the $Q$-function. But contrary to
\cite{adv_upd}'s Advantage Updating, learning is still done in a standard
way on the $Q$-function obtained from adding these two components. Thus this
approach reparameterizes $Q$ but does not change scalings
and does not result in an invariant algorithm for small $\deltat$.


The problem studied here is a continuity effect quite distinct from
multiscale RL approaches: indeed the issue arises even if there
is only one timescale in the environment. Arguably, a small
$\deltat$ can be seen as a mismatch between the algorithm's
timescale and the physical system's timescale, but the collapse of the
$Q$ function to the $V$ function is an intrinsic mathematical phenomenon arising from time
continuity.

Reinforcement learning has been studied from a mathematical perspective
when time and space are both continuous, in connection with optimal
control and the Hamilton--Jacobi--Bellman (HJB) equation (a PDE which
characterizes the value function for continuous space-time). Explicit
algorithms for continuous
space-time can be found in
\cite{cont_rl,MunosBourgines98} (see also the references therein).
\cite{MunosBourgines98} use a grid approach to provably solve the HJB
equation when discretization tends to $0$ (assuming every state in the
grid is visited a large number of times). However,
the resulting algorithms are
impractical \cite{cont_rl} for larger-dimensional problems.
\cite{cont_rl} focusses on algorithms specific to the continuous
space-time case, including advantage updating and modelling
the time derivative of the environment.

Here on the other hand we focus on generic deep RL algorithms that can handle
both discrete and continuous time and space, without collapsing in
continuous time, thus being robust to arbitrary timesteps.



\section{Near Continuous-Time Reinforcement Learning}
\label{sec:framework}

Many reinforcement learning environments are not intrinsically
time-discrete, but discretizations of an underlying continuous-time
environment. For instance, many simulated control environments, such as
the Mujoco environments~\cite{ddpg} or OpenAI Gym classic control
environments~\cite{gym}, are discretizations of continuous-time control
problems.  In simulated environments, the time discretization is fixed by
the simulator, and is often used to approximate an underlying
differential equation.  In this case, the timestep may correspond to the
number of frames generated by second.  In real world environments,
sensors and actuators have a fixed time precision: cameras can only
capture a fixed amount of frames per second, and physical limitations
prevent actuators from responding instantaneously. The quality of these
components thus imposes a lower bound on the discretization timestep. As
the timestep $\deltat$ is 
largely a constraint imposed by computational ressources, we
would expect that decreasing $\deltat$ would only improve the performance
of RL agents (though it might make optimization
harder).  RL algorithms should, at least, be resilient to a change of
$\deltat$, and should remain viable when $\deltat \rightarrow 0$.
Besides, designing a time discretization invariant algorithm could
alleviate tedious hyperparameterization by providing better defaults for
time-horizon-related parameters.

We are thus interested in the behavior of RL
algorithms in discretized environments, when the discretization timestep
is small. We will refer to such environments as \emph{near-continuous
environments}.
A formalized view of near-continuous environments is
given below, along with $\deltat$-dependent definitions of return, discount
factor and value functions, that converge to well defined
continous-time limits. The state-action value
function is shown to collapse to the value function as $\deltat$ goes to $0$.
Consequently there is no $Q$-learning in continuous time, foreshadowing
problematic behavior of $Q$-learning with small timesteps.

\subsection{Framework}




Let ${\cal S} = \mathbb{R}^d$ be a set of states, and ${\cal A}$ be a set of
actions. Consider the continuous-time \emph{Markov Decision Process} (MDP) defined by the differential equation 
        \begin{equation}
	\label{eq:diffusion}
	ds_t/dt = F(s_t, a_t) 
              \end{equation}
where $F\colon {\cal S}\times{\cal A}\rightarrow {\cal S}$ describes
the dynamics of the environment. The agent interacts with the environment through a deterministic policy function
$\pi \colon {\cal S} \rightarrow {\cal A}$, so that $a_t=\pi(s_t)$.
Actions can be discrete or continuous. 
For simplicity we assume here that both the dynamics and exploitation policy are
deterministic; \footnote{
      We believe the results presented here hold more
      generally, assuming states
      follow a \emph{stochastic} differential equation \begin{equation}
	      ds = F(s, a)dt  + \Sigma(s, a)\dbt
	      \label{eq:sde}
      \end{equation} with
      $B_t$ a multidimensional Brownian motion and $\Sigma$ a covariance matrix. A
      formal treatment of SDEs is beyond the scope of this paper.}
 the exploration policy will be random, but care must be
taken to define proper random policies 
in continuous time, especially with discrete actions (Sec.~\ref{subsec:explo}).

For any timestep $\deltat>0$, we can define an MDP ${\cal
M}_\deltat = \langle {\cal S}, {\cal A}, T_{\deltat}, r_\deltat,
      \gamma_\deltat\rangle$ as a discretization of the continuous-time MDP with
      \emph{time discretization} $\deltat$. The 
      transition function of a state $s$
      is 
      the
      state obtained
      when starting at $s_0 = s$ and maintaining $a_t=a$ constant for a time
      $\deltat$.
      This corresponds to an agent evolving in the continuous
      environment \eqref{eq:diffusion}, but 
      only making observations and choosing actions every $\deltat$. The
      rewards and decay factor are specified below. We
      call such an  MDP ${\cal M}_\deltat$
      \emph{near-continuous}.      



A necessary condition for robustness of an algorithm for
near-continuous time MDPs is 
to remain viable when $\deltat \rightarrow 0$. Thus we will try to
make sure the various quantities involved converge to meaningful
limits when $\deltat\to 0$. 

We give semi-formal statements below; the full statements, proofs, and
technical assumptions (typically, differentiability assumptions)
can be found in the supplementary material.

%


\paragraph{Return and discount factor.}
Suitable 
$\deltat$-dependent scalings of the discount factor $\gamma_\deltat$
and reward $r_\deltat$ are as follows. These definitions fit the discrete case
when $\deltat=1$, and provide well-defined, non-trivial returns
and value functions when $\deltat$ goes to $0$.

For a continuous MDP and a continuous trajectory $\tau = (s_t, a_t)_t$,
the return is defined as~\cite{cont_rl}
\begin{equation}
\label{eq:continuous-return}
R(\tau) \deq \int_{t=0}^\infty\gamma^t\,r(s_t, a_t)\,dt.
\end{equation}

A natural time discretization is obtained  by defining the discretized
return $R_\deltat$ of the MDP $\mathcal{M}_{\deltat}$ as
\begin{equation}
\label{eq:discretized-return}
R_\deltat(\tau) \deq \sum_{k=0}^\infty\gamma^{k\deltat}\,
r(s_{k\deltat}, a_{k\deltat})\,\deltat
\end{equation}
and the discretized return will correspond to the continuous-time return
if we set the decay factor $\gamma_\deltat$ and rewards $r_\deltat$ of
the discretized MDP ${\cal M}_\deltat$ to
\begin{align}
\label{eq:def-gamma}
\gamma_\deltat \deq \gamma^\deltat,\qquad
r_\deltat \deq \deltat . r.
\end{align}

      \paragraph{Physical time vs algorithmic time, time horizon.}
        In near-continuous environments, there are two notions of time:
the algorithmic time $k$ (number of steps or actions taken), and the
physical time $t$ (time spent
in the underlying continuous time environment), related via
$t=k.\deltat$.

The time horizon is, informally, the time range over
which the agent optimizes its return.
As a rule of thumb,
the time horizon of an agent with discount factor $\gamma$ is
of order $\frac{1}{1 - \gamma}$ steps; beyond that, the decay factor
kicks in and the influence of further rewards becomes small.

The definition~\eqref{eq:def-gamma} of the decay factor $\gamma_\deltat$ in
near-continuous environments keeps the time horizon constant in
\emph{physical} time, by making $\gamma_\deltat$ close to $1$ in
algorithmic time. Indeed, physical time horizon is $\deltat$ times the
algorithmic time horizon, namely
\begin{equation}
          \label{eq:time-horizon}
          \frac{\deltat}{1-\gamma^\deltat}
          = - \frac{1}{\log \gamma} + \bigO(\deltat)
          \approx \frac{1}{1-\gamma},
        \end{equation}
which is thus stable when $\deltat\to 0$. 
On the other hand,
if $\gamma_\deltat$ was left constant
as $\deltat$ goes to $0$, the corresponding time horizon in physical time
would be $\approx \frac{\deltat}{1 - \gamma}$ which goes to $0$ when
$\deltat$ goes to $0$: such an agent would be increasingly short-sighted
as $\deltat\to 0$.

In the following, we use the suitably-scaled decay factor
\eqref{eq:def-gamma} both for Deep Advantage Updating and for the
classical deep $Q$-learning baselines.
%

\paragraph{Value function.}
The return \eqref{eq:continuous-return} leads to the
following continuous-time value function
\begin{align}
  V^\pi(s) &= \E_{\tau\sim\pi}\left[R(\tau) \mid s_0 = s\right]\\
  &= \E_{\tau\sim\pi}\left[\int\limits_{0}^\infty \gamma^{t}\, r(s_t,
  a_t) \,dt \mid s_0 = s\right].
\end{align}
Meanwhile, the value function (in the ordinary sense) of the discrete MDP ${\cal
M}_\deltat$ is
\begin{align}
  V^\pi_\deltat(s) &= \E_{\tau\sim\pi}\left[R_\deltat(\tau) \mid s_0 = s\right]\\
  &= \E_{\tau\sim\pi}\left[\sum\limits_{k=0}^\infty
  \gamma^{k\deltat}\,r(s_{k\deltat}, a_{k\deltat})\,\deltat \mid s_0 = s\right]
\end{align}
which obeys the Bellman equation
\footnote{
If the continuous MDP follows the dynamics \eqref{eq:diffusion}, the limit of
the Bellman equation \eqref{eq:bellman} for $V^\pi_\deltat$ when $\deltat \rightarrow 0$ is
the \emph{Hamilton--Jacobi--Bellman} equation on $V^\pi$ \cite{cont_rl},
namely,
  $r + \nabla_s V^\pi \cdot F = - \log(\gamma) V^\pi$.
}
\begin{equation}
  \label{eq:bellman}
  V^\pi_\deltat(s) = \reward(s, \pi(s))\,\deltat + \gamma^{\deltat}\,
  \E_{s_{(k+1)\deltat} | s_{k\deltat} = s} V^\pi_\deltat(s_{(k+1)\deltat})
\end{equation}

When the timestep tends to $0$, one converges to the other.
  \begin{theorem}
    \label{th:conv-value}
Under suitable smoothness assumptions, $V^\pi_\deltat(s)$ converges to
$V^\pi(s)$ when $\deltat\to 0$.
   \end{theorem}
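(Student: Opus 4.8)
The plan is to recognize $R_\deltat(\tau)$ as a Riemann sum for the integral defining $R(\tau)$, and to control separately the quadrature error and the fact that the discretized trajectory does not exactly follow the continuous one. Since both the dynamics and the exploitation policy are deterministic, the expectations in $V^\pi_\deltat$ and $V^\pi$ are trivial, so it suffices to prove $R_\deltat(\tau_\deltat)\to R(\tau)$. Here $\tau=(s_t,a_t)$ is the continuous trajectory with $a_t=\pi(s_t)$ solving $ds_t/dt=F(s_t,\pi(s_t))$, whereas in $\tau_\deltat$ the action is held constant at $\pi(s_{k\deltat})$ on each interval $[k\deltat,(k+1)\deltat)$; write $s^\deltat_t$ for the corresponding discretized state.

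First I would split the return into a finite-horizon part over $[0,T]$ and a tail over $(T,\infty)$, and dispose of the tail uniformly in $\deltat$. Assuming $\reward$ is bounded by some $r_{\max}$, the tail of the integral is at most $r_{\max}\,\gamma^{T}/(-\log\gamma)$, while the tail of the sum is at most $r_{\max}\,\gamma^{T}\,\deltat/(1-\gamma^{\deltat})$. Using $\deltat/(1-\gamma^{\deltat})\to -1/\log\gamma$ as $\deltat\to 0$ (the same estimate as in \eqref{eq:time-horizon}), both tails are bounded, uniformly in $\deltat$, by a quantity that vanishes as $T\to\infty$. Thus for any $\eps>0$ one fixes $T$ large enough to make both tails smaller than $\eps$.

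On the compact interval $[0,T]$ there are then two errors to estimate. The first is the trajectory mismatch: the discretized scheme is an Euler-type integrator with piecewise-constant controls, so under Lipschitz assumptions on $F$ and on $\pi$, a Gronwall argument yields $\sup_{t\le T}\lVert s_t-s^\deltat_t\rVert=\bigO(\deltat)$. The second is the quadrature error for the \emph{continuous} trajectory: the integrand $t\mapsto\gamma^{t}\,\reward(s_t,\pi(s_t))$ is continuous, so its Riemann sums over $[0,T]$ converge to the integral as $\deltat\to 0$. Combining these two estimates through the Lipschitz continuity of $\reward$ and the triangle inequality shows the $[0,T]$ contributions of $R_\deltat$ and $R$ differ by $\bigO(\deltat)$.

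The main obstacle is the trajectory-mismatch step, since $s_t$ and $s^\deltat_t$ genuinely solve different ODEs (continuously-updated versus constant action), and making the Gronwall estimate rigorous is exactly where the differentiability and Lipschitz hypotheses alluded to in the statement enter; it is also the step that, in the stochastic setting of the footnote, would have to be replaced by a stability estimate on the flow of the SDE \eqref{eq:sde}. Once the finite-horizon and tail bounds are assembled, letting $\deltat\to 0$ and then $T\to\infty$ gives $V^\pi_\deltat(s)\to V^\pi(s)$, as claimed.
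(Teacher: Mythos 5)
Your proposal is correct and follows essentially the same route as the paper: the same tail/finite-horizon split, the same Gronwall--Euler trajectory-convergence lemma to control $\sup_{t\le T}\lVert s_t - s^\deltat_t\rVert$, and the same use of boundedness and Lipschitz continuity of $s\mapsto r(s,\pi(s))$. The only difference is bookkeeping: you fix $T$ for a given $\eps$ and then let $\deltat\to 0$, whereas the paper couples the two by taking $T=-\log(\deltat)/K$, which yields an explicit (if slow) rate $\bigO(\deltat^{-\log\gamma/K})$ rather than just $\smallo(1)$.
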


\subsection{There is No $Q$-Function in Continuous Time}

Contrary to the value function, the action-value function $Q$ is ill-defined
for continuous-time MDPs. More precisely, the $Q$-function  collapses to the
$V$-function when $\deltat \rightarrow 0$. In near continuous time, the effect of
individual actions on the $Q$-function is of order $\bigO(\deltat)$. This
will make 
ranking of actions difficult, especially with an approximate
$Q$-function.
This argument appears informally in \cite{adv_upd}.
Formally:
\begin{theorem}
  \label{th:q-cont}
  Under suitable smoothness assumptions,
The action-value function of a near-continuous MDP is related to its
value function via
\begin{equation}
	\label{eq:q-discr}
	Q^\pi_\deltat(s, a) = V^\pi_\deltat(s) + \bigO\left(\deltat\right)
\end{equation}
when $\deltat \to 0$, for every $(s,a)$.
\end{theorem}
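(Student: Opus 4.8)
The plan is to exploit the fact that $Q^\pi_\deltat$ obeys a one-step Bellman relation differing from the Bellman equation~\eqref{eq:bellman} for $V^\pi_\deltat$ only in the \emph{first} action taken, and then to show that deviating from $\pi$ for a single interval of length $\deltat$ can only alter the return by $\bigO(\deltat)$.

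First I would write the defining relation of the action-value function. By construction $Q^\pi_\deltat(s,a)$ is the return obtained by holding action $a$ constant over the first interval of length $\deltat$ and then following $\pi$; with the scalings~\eqref{eq:def-gamma} this reads
\begin{equation}
	Q^\pi_\deltat(s, a) = \reward(s, a)\,\deltat + \gamma^\deltat\, V^\pi_\deltat\big(T_\deltat(s, a)\big),
\end{equation}
where $T_\deltat(s,a)$ is the state reached from $s$ after maintaining $a$ for time $\deltat$. Equation~\eqref{eq:bellman} is exactly this identity with $a=\pi(s)$, so subtracting the two gives
\begin{equation}
	Q^\pi_\deltat(s, a) - V^\pi_\deltat(s) = \big(\reward(s, a) - \reward(s, \pi(s))\big)\,\deltat + \gamma^\deltat\,\big(V^\pi_\deltat(T_\deltat(s, a)) - V^\pi_\deltat(T_\deltat(s, \pi(s)))\big).
\end{equation}

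The first term is $\bigO(\deltat)$ as soon as $\reward$ is bounded on the region of interest. For the second, I would expand the transition map by integrating~\eqref{eq:diffusion} with the action held constant, obtaining the first-order (Picard/Taylor) expansion $T_\deltat(s,a) = s + F(s,a)\,\deltat + \bigO(\deltat^2)$; hence the two successor states satisfy $T_\deltat(s,a) - T_\deltat(s,\pi(s)) = \big(F(s,a) - F(s,\pi(s))\big)\,\deltat + \bigO(\deltat^2) = \bigO(\deltat)$. Since $\gamma^\deltat = 1 + \bigO(\deltat)$ is bounded, it then remains to bound the value difference by the Lipschitz constant of $V^\pi_\deltat$ times $\bigO(\deltat)$, which yields $Q^\pi_\deltat(s,a) - V^\pi_\deltat(s) = \bigO(\deltat)$.

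The main obstacle is controlling the regularity of $V^\pi_\deltat$ \emph{uniformly} in $\deltat$: the pointwise convergence $V^\pi_\deltat \to V^\pi$ of Thm.~\ref{th:conv-value} does not by itself provide a Lipschitz constant that stays bounded as $\deltat\to 0$, and a blow-up there would break the last step. This is precisely where the ``suitable smoothness assumptions'' enter. I would obtain a uniform bound directly from the series $V^\pi_\deltat(s) = \E[\sum_k \gamma^{k\deltat}\,\reward(s_{k\deltat}, \pi(s_{k\deltat}))\,\deltat]$ by estimating the sensitivity of the closed-loop trajectory to its initial state through Grönwall's inequality: if the closed-loop drift $s\mapsto F(s,\pi(s))$ is Lipschitz with constant $L$, two trajectories started a distance $\varepsilon$ apart stay within $\varepsilon\,e^{Lt}$ at physical time $t = k\deltat$. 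The discount $\gamma^{k\deltat} = e^{t\log\gamma}$ damps this growth, and provided $-\log\gamma > L$ the resulting geometric series sums to a bound on the Lipschitz constant of $V^\pi_\deltat$ that is independent of $\deltat$, closing the argument.
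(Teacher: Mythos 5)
Your proof is correct and follows essentially the same route as the paper's: both rest on the one-step Bellman relation for $Q^\pi_\deltat$, the first-order expansion $s' = s + F(s,a)\,\deltat + \smallo(\deltat)$ of the successor state, and a regularity bound on $V^\pi_\deltat$ that is uniform in $\deltat$. The only substantive addition is that you sketch a Gr\"onwall argument (under the extra condition $-\log\gamma > L$) to justify the uniform Lipschitz constant, whereas the paper simply takes uniformly bounded derivatives of $V^\pi_\deltat$ as one of its ``suitable smoothness assumptions.''
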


As a consequence, in exact continuous time,
$Q^\pi$ is equal to $V^\pi$:  it does not bear \emph{any} information on the ranking of actions, and
thus cannot be used to select actions with higher returns.
There is no continuous-time $Q$-learning.

\begin{proof}
The discrete-time $Q$-function of the MDP $\mathcal{M}_\deltat$ satisfies the Bellman equation
  \begin{align}
    Q^\pi_\deltat(s, a) &= r(s, a)\,\deltat + \gamma^\deltat \E_{s'|s,
    a}\left[V^\pi_\deltat(s')\right] .
  \end{align}
The dynamics \eqref{eq:diffusion} of the environment yields
  \begin{equation}
  s' = s + F(s, a) \,\deltat + \smallo(\deltat).
  \end{equation}
  Assuming that $V^\pi_\deltat$ is continuously differentiable with
  respect to the state, and that its derivatives are uniformly bounded, we
  obtain,
  \begin{align}
  V^\pi_\deltat(s') &= V^\pi_\deltat(s) + \nabla_s V^\pi_\deltat(s)\cdot
  F(s, a)\,\deltat + \smallo(\deltat)\\
		    &= V^\pi_\deltat(s) + \bigO(\deltat)
  \end{align}
  Expanding $V^\pi_\deltat(s')$ into $Q^\pi_\deltat$ yields
  \begin{align}
  Q^\pi_\deltat(s, a) &= r(s, a)\,\deltat + \gamma^\deltat (V^\pi_\deltat(s) + \bigO(\deltat))\\
		      &= \bigO(\deltat)+(1 + \bigO(\deltat)) (V^\pi_\deltat(s) + \bigO(\deltat)) \\
		      &= V^\pi_\deltat(s) + \bigO(\deltat).
  \end{align}
which ends the proof.
  \end{proof}


\section{Reinforcement Learning with a Continuous-Time Limit}

We now define a discrete algorithm with a well-defined continuous-time
limit.  It relies on three elements: defining and learning a quantity
that still contains information on action rankings in the limit, using
exploration methods with a meaningful limit, and scaling learning rates
to induce well-behaved parameter trajectories when $\deltat$ goes to $0$.


\subsection{Advantage Updating}
\label{subsec:reparam}

As seen above, there is no continuous time limit to $Q$-learning, because
$Q^\pi$ becomes independent of actions and thus cannot be
used to select actions.  With small but nonzero $\deltat$,
$Q^\pi_\deltat$ still depends on actions, and could still be used to
choose actions. However, when
approximating $Q^\pi_\deltat$, if the approximation error is much larger
than $\bigO(\deltat)$, this error dominates, the ranking of
actions given by the approximated $Q^\pi_\deltat$ is likely to be erroneous.

To define an object which contains the same information on actions as
$Q^\pi_\deltat$, but admits a learnable action-dependent limit, it is
natural to define \cite{adv_upd}
\begin{align}
	A^\pi_\deltat(s, a) &\mathdef \frac{Q^\pi_\deltat(s,a) - V^\pi_\deltat(s)}{\deltat},
    \label{eq:adv}
\end{align}
a rescaled version of the advantage function, as the difference between between
$Q^\pi_\deltat(s, a)$ and $V^\pi_\deltat(s)$ is of order
$\bigO(\deltat)$. This amounts to splitting $Q$ into value and advantage,
and observing that these scale very differently when $\deltat\to 0$.

Contrary to the $Q$-function,
this
rescaled advantage function converges when $\deltat\to 0$
to a non-degenerate action-dependent quantity.

\begin{theorem}
\label{thm:Alimit}
Under suitable smoothness assumptions, $A^\pi_\deltat(s,a)$ has a limit
$A^\pi(s,a)$ when $\deltat\to 0$. The limit keeps information about
actions: namely, if a policy $\pi'$ strictly dominates $\pi$, 
then
$A^\pi(s,\pi'(s))>0$ for some state $s$.
\end{theorem}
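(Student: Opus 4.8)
The plan is to establish the existence of the limit $A^\pi(s,a)$ first, and then separately extract the information-about-actions property from a careful analysis of the advantage sign.

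For existence of the limit, I would start from the two Bellman relations already used in the proof of Theorem~\ref{th:q-cont}. Writing $Q^\pi_\deltat(s,a) = r(s,a)\,\deltat + \gamma^\deltat\,\E_{s'|s,a}[V^\pi_\deltat(s')]$ and $V^\pi_\deltat(s) = r(s,\pi(s))\,\deltat + \gamma^\deltat\,\E_{s'|s,\pi(s)}[V^\pi_\deltat(s')]$, I would subtract these, divide by $\deltat$, and track the $\bigO(\deltat)$ expansions explicitly rather than collapsing them. Using $s' = s + F(s,a)\,\deltat + \smallo(\deltat)$ together with a first-order Taylor expansion of $V^\pi_\deltat$ in the state, and using $\gamma^\deltat = 1 + \log(\gamma)\,\deltat + \bigO(\deltat^2)$, the leading-order terms give
\begin{align}
A^\pi_\deltat(s,a) &= \bigl(r(s,a) - r(s,\pi(s))\bigr) \nonumber\\
&\quad + \nabla_s V^\pi_\deltat(s)\cdot\bigl(F(s,a) - F(s,\pi(s))\bigr) + \smallo(1).
\end{align}
Provided $V^\pi_\deltat \to V^\pi$ with convergence of the gradients (this is where the smoothness assumptions in the supplementary material are needed, and is the cleanest place to invoke Theorem~\ref{th:conv-value}), the right-hand side converges, yielding the explicit candidate limit
\begin{equation}
A^\pi(s,a) = r(s,a) - r(s,\pi(s)) + \nabla_s V^\pi(s)\cdot\bigl(F(s,a) - F(s,\pi(s))\bigr).
\end{equation}
In particular $A^\pi(s,\pi(s)) = 0$, which is the expected normalization.

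For the second part, suppose $\pi'$ strictly dominates $\pi$, meaning $V^{\pi'}(s) \geq V^\pi(s)$ everywhere with strict inequality somewhere. I would argue by contradiction: assume $A^\pi(s,\pi'(s)) \leq 0$ for every state $s$. The quantity $A^\pi(s,\pi'(s))$ is, up to the factor $\log\gamma$, exactly the expression appearing in the Hamilton--Jacobi--Bellman operator (see the footnote after \eqref{eq:bellman}) when one substitutes the action $\pi'(s)$ into the Bellman equation for $V^\pi$. The idea is that $A^\pi(s,\pi'(s))$ measures the instantaneous improvement rate of switching from $\pi$ to $\pi'$ at state $s$, so if it were nonpositive everywhere, following $\pi'$ could never beat following $\pi$. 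Concretely, I would consider the continuous-time trajectory $s_t$ generated by $\pi'$ from $s_0$ and differentiate $\gamma^t\,V^\pi(s_t)$ along it; using $ds_t/dt = F(s_t,\pi'(s_t))$ and the HJB identity $r(s,\pi(s)) + \nabla_s V^\pi\cdot F(s,\pi(s)) = -\log(\gamma)\,V^\pi(s)$, the derivative works out to $\gamma^t\bigl(\log(\gamma)\,A^\pi(s_t,\pi'(s_t)) - r(s_t,\pi'(s_t))\bigr)$ type terms, so integrating from $0$ to $\infty$ recovers $V^{\pi'}(s_0) - V^\pi(s_0)$ as an integral of $A^\pi(s_t,\pi'(s_t))$ against a positive weight. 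If $A^\pi(\cdot,\pi'(\cdot)) \leq 0$ everywhere this forces $V^{\pi'} \leq V^\pi$, contradicting strict domination.

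The main obstacle I anticipate is the rigorous justification of the continuous-time integral identity, i.e. the continuous analogue of the policy-gradient/advantage telescoping used routinely in discrete time. In discrete time it follows from a standard telescoping of Bellman residuals, but in continuous time it requires differentiating $\gamma^t V^\pi(s_t)$ along the $\pi'$-trajectory, which presupposes differentiability of $V^\pi$ and integrability at $t\to\infty$ (guaranteed by the $\gamma^t$ decay provided $V^\pi$ stays bounded along trajectories). Getting the correct sign and weight in this identity, and matching them to the definition of $A^\pi$ up to the $\log\gamma$ factor, is the delicate bookkeeping step; everything else is routine Taylor expansion of the kind already carried out for Theorem~\ref{th:q-cont}.
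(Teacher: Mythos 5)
The proposal is correct and follows essentially the same route as the paper: the limit is obtained by a first-order Taylor expansion of the Bellman equation (your difference form $r(s,a)-r(s,\pi(s))+\nabla_s V^\pi(s)\cdot\bigl(F(s,a)-F(s,\pi(s))\bigr)$ is equivalent, via the HJB identity, to the paper's $A^\pi(s,a)=r(s,a)+\log(\gamma)\,V^\pi(s)+\nabla_s V^\pi(s)\cdot F(s,a)$), and the action-information claim is proved by differentiating $\gamma^t V^\pi(s_t)$ along a $\pi'$-trajectory, which is exactly the paper's monotone function $B(T)=\int_0^T\gamma^t r(s_t,\pi'(s_t))\,dt+\gamma^T V^\pi(s_T)$ with $\dot B(T)=\gamma^T A^\pi(s_T,\pi'(s_T))$. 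Your only slip is the spurious $\log\gamma$ factor placed in front of $A^\pi$ in the derivative (the correct weight is just $\gamma^t$, giving $V^{\pi'}(s_0)-V^\pi(s_0)=\int_0^\infty\gamma^t A^\pi(s_t,\pi'(s_t))\,dt$), but since you explicitly flag this bookkeeping as the step to verify, it does not affect the soundness of the argument.
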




\paragraph{Learning $A^\pi$.} The discretized $Q$-function rewrites as
\begin{equation}
	Q^\pi_\deltat(s, a) = V^\pi_\deltat(s) + \deltat A^\pi_\deltat(s, a).
	\label{eq:reparam_q_pi}
\end{equation}
A natural way to approximate $V^\pi_\deltat$ and $A^\pi_\deltat$ is to apply
\emph{Sarsa} or $Q$-learning to a reparameterized $Q$-function approximator
\begin{equation}
	Q_\Theta(s, a) \deq V_{\theta}(s) + \deltat A_{\psi}(s, a).
\end{equation}
with $\Theta \deq (\theta, \psi)$. At initialization, if both $V_{\theta}$ and
$A_{\psi}$ are initialized independently of $\deltat$, this parameterization
provides reasonable scaling of the contribution of actions versus states
in $Q$.
Our goal is for $V_\theta$ to approximate $V^\pi_\deltat$ and for
$A_{\psi}$ to approximate $A^\pi_\deltat$.

Still, this reparameterization does not, on its own, guarantee that $A$
correctly approximates $A^\pi_\deltat$ if
$Q_\Theta$ approximates $Q^\pi_\deltat$
.
Indeed, for any given pair $(V_\theta,\,A_\psi)$, the pair $(V_\theta(s) -
f(s),\,A_\psi(s,a)+f(s)/\deltat)$ (for an arbitrary $f$)
yields the exact same function $Q_\Theta$. This new $A_\psi$ still defines 
the same ranking of actions, yet this phenomenon might cause
numerical problems or instability of $A_\psi$ when $\deltat\to 0$, and prevents direct
interpretation of the learned $A_\psi$.
To enforce identifiability of $A_\psi$, one must enforce the consistency equation
\begin{equation}
	V^\pi_\deltat(s) = Q^\pi_\deltat(s,
	\pi(s))
\end{equation}
on the approximate $A_\psi$ and $V_\theta$. This translates to
\begin{equation}
	A_\psi(s, \pi(s)) = 0.
\end{equation}
With this additional constraint, if $Q_\Theta = Q^\pi_\deltat$, then $A_\psi =
A^\pi_\deltat$ and $V_\theta = V^\pi_\deltat$: indeed
\begin{align}
	A^\pi_\deltat(s, a) &= \frac{Q^\pi_\deltat(s,a) - V^\pi_\deltat(s)}{\deltat}\\
		    &= \frac{Q_\Theta(s,a) - Q_\Theta(s, \pi(s))}{\deltat}
		    = A_\psi(s, a).
\end{align}
In the spirit of~\cite{dueling_nets}, instead of directly parameterizing $A_\psi$,
we define a parametric function $\bar{A}_\psi$ (typically a neural network),
and use $\bar{A}_\psi$ to define $A_\psi$ as
\begin{equation}
\label{eq:Aparam}
	A_\psi(s, a) \deq \bar{A}_\psi(s, a) - \bar{A}_\psi(s, \pi(s))
\end{equation}
so that $A_\psi$ directly verifies the consistency condition.

This approach will lead to $\deltat$-robust algorithms for
approximating $A^\pi_\deltat$, from which a ranking of actions can be
derived.

\subsection{Timestep-Invariant Exploration}
\label{subsec:explo}

To obtain a timestep-invariant RL algorithm, a timestep-invariant
exploration scheme is required. 
For \emph{continuous} actions, \cite{ddpg} already introduced such a scheme, 
by adding an
\emph{Ornstein--Uhlenbeck} \cite{orn-uhl} (OU) random process to the
actions. Formally, this is defined as
\begin{equation}
	\pi^\text{explore}(s_{k\deltat}, z_{k\deltat}) \deq
	\pi(s_{k\deltat}) + z_{k\deltat}
\end{equation}
with $z_{k\deltat}$ the discretization of a continuous-time OU process,
\begin{equation}
	dz_t = - z_t \,\kappa\, dt + \sigma \,\dbt.
	\label{eq:orn_uhl}
\end{equation}
where $B_t$ is a brownian motion, $\kappa$ a stiffness parameter and
$\sigma$ a noise scaling parameter. The discretized trajectories of $z$ converge
to nontrivial continuous-time trajectories, exhibiting Brownian
behavior with a recall force towards $0$.

This exploration can be extended to schemes of the form
\begin{equation}
  \label{eq:explore}
	a_{k\deltat} = \pi^\text{explore}_\deltat(s_{k\deltat},
	z_{k\deltat})
\end{equation}
with $(z_{k\deltat})_{k\geq 0}$ a sequence of random variables independent from the $a$'s and $s$'s.
A sufficient condition for this policy to admit a continuous-time
limit is for the sequence
$z_{k\deltat}$ to converge in law to a
well-defined continuous stochastic process $z_t$ as $\deltat$ goes to $0$.

Thus, for \emph{discrete} actions we can obtain a consistent
exploration scheme by taking
$z_\deltat$ to be a discretization of an $(\#\mathcal{A})$-dimensional
continuous OU process, and setting
\begin{equation}
  \pi^\text{explore}(s_{k\deltat}, z_{k\deltat})\deq \argmax_{a}
  \left(A_\psi(s_{k\deltat}, a) + z_{k\deltat}[a]\right)
\end{equation}
where $z_{k\deltat}[a]$ denotes the $a$-th component of $z_{k\deltat}$. Namely,
we perturb the \emph{advantage values} by a random process before selecting an action. The
resulting scheme converges in continuous time to a nontrivial exploration
scheme.

On the other hand,
$\varepsilon$-greedy
exploration is likely \emph{not} to explore, i.e., to collapse to a deterministic
policy, when $\deltat$ goes to $0$.
Intuitively,
with very small $\deltat$, changing the action at random every $\deltat$ time
step just averages out the randomness due to the law of large numbers.
More precisely:

\begin{theorem}
Consider a near-continuous MDP in which an agent selects an
action according to an $\eps$-greedy policy that mixes a deterministic
exploitation policy $\pi$ with an action taken from a noise policy
$\pi^\text{noise}(a|s)$ with probability $\eps$ at each step. Then the
agent's trajectories converge when $\deltat\to 0$ to the solutions of the
\emph{deterministic} equation
\begin{equation}
d s_t/dt= (1-\eps) F(s_t,\pi(s_t))+\eps \E_{a\sim
\pi^\text{noise}(a|s)}F(s_t,a)
\end{equation}
\end{theorem}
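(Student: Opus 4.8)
The plan is to show that the $\eps$-greedy rule turns the state sequence into a random walk whose increments have conditional mean of order $\deltat$ but conditional variance of order $\deltat^2$; summed over the $\bigO(1/\deltat)$ steps contained in a fixed physical horizon, the fluctuations then vanish and the trajectory concentrates on the deterministic averaged flow. The decisive point is that the action noise perturbs the state only through an increment of size $\bigO(\deltat)$, so its variance scales as $\deltat^2$ rather than $\deltat$: this is exactly why the limit is an ordinary differential equation rather than a diffusion.

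First I would expand a single transition. At step $k$ the action $a_k$ equals $\pi(s_{k\deltat})$ with probability $1-\eps$ and is drawn from $\pi^\text{noise}(\cdot\mid s_{k\deltat})$ with probability $\eps$; it is then held constant while the state evolves under \eqref{eq:diffusion}. Assuming $F$ bounded and Lipschitz, this yields
\begin{equation}
s_{(k+1)\deltat} = s_{k\deltat} + F(s_{k\deltat}, a_k)\,\deltat + \bigO(\deltat^2).
\end{equation}
Writing $\bar F(s) \deq (1-\eps)\,F(s,\pi(s)) + \eps\,\E_{a\sim\pi^\text{noise}(a\mid s)}F(s,a)$ for the averaged drift in the statement and conditioning on the past $\mathcal F_k$, the conditional mean increment is $\bar F(s_{k\deltat})\,\deltat + \bigO(\deltat^2)$. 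I would therefore decompose
\begin{equation}
s_{(k+1)\deltat} - s_{k\deltat} = \bar F(s_{k\deltat})\,\deltat + M_k + \bigO(\deltat^2),
\end{equation}
where $M_k$ is a martingale increment, $\E[M_k\mid\mathcal F_k]=0$, of magnitude $\bigO(\deltat)$, hence $\E[\|M_k\|^2\mid\mathcal F_k]=\bigO(\deltat^2)$.

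Next I would bound the two contributions over a fixed horizon $T$, i.e.\ over $N=T/\deltat$ steps. The cumulative noise $\sum_k M_k$ is a martingale whose total conditional variance is $N\cdot\bigO(\deltat^2)=\bigO(T\deltat)\to 0$, so by Doob's maximal inequality $\sup_{t\le T}\|\sum_{k\,:\,k\deltat\le t}M_k\|$ tends to $0$ in $L^2$, and in probability. The deterministic part $\sum_{k\,:\,k\deltat\le t}\bar F(s_{k\deltat})\,\deltat$ is a Riemann sum approximating $\int_0^t \bar F(s_u)\,du$. Letting $\bar s_t$ be the unique solution of $d\bar s_t/dt=\bar F(\bar s_t)$ with $\bar s_0=s_0$ (uniqueness holds because $\bar F$ is Lipschitz under the stated smoothness assumptions), I would bound $\|s_{k\deltat}-\bar s_{k\deltat}\|$ by induction on $k$ and apply Gronwall's inequality, using the Lipschitz continuity of $\bar F$ on the drift term, the martingale bound on the noise, and the $\bigO(\deltat^2)$ discretization remainders. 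This gives $\sup_{t\le T}\|s_t-\bar s_t\|\to 0$ in probability as $\deltat\to 0$, which is the claim.

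The main obstacle is the feedback between state and drift: because $\bar F$ is evaluated at the current random state, the increments are neither independent nor centered on a fixed reference trajectory, so a naive law of large numbers does not apply. The remedy is precisely the coupling-plus-Gronwall step, which propagates the instantaneous error forward in time while the martingale variance bound guarantees that the accumulated exploration noise stays negligible; keeping these two effects simultaneously under control along the whole horizon is the technical heart of the argument.
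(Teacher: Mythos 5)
Your proof is correct, and it rests on exactly the same key observation as the paper's: because the action only enters the state through an increment of size $\bigO(\deltat)$, the per-step noise has conditional variance $\bigO(\deltat^2)$, so the accumulated fluctuation over the $\bigO(1/\deltat)$ steps in a fixed physical horizon has variance $\bigO(\deltat)$ and vanishes, leaving only the averaged drift $\bar F$. The technical route differs, though. The paper runs the chain at discretization $\deltat^2$ and aggregates blocks of $\lfloor 1/\deltat\rfloor$ micro-steps into an effective Euler step of size $\deltat$ with noise $\xi_k$ satisfying $\E[\|\xi_k\|^2]\leq\sigma^2\deltat^3$; it then compares this blocked process to the deterministic Euler scheme $\tilde s^{k+1}=\tilde s^k+f(\tilde s^k)\deltat$ by a Gronwall-type induction, controls the noise sum via Markov's inequality on first moments (getting a pointwise-in-time rate $\bigO(\sqrt{\deltat})$ in probability), and finally invokes its Lemma~1 to pass from the Euler scheme to the ODE. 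You instead work at the single scale $\deltat$, write the increment as drift plus a martingale difference $M_k$, and use orthogonality of martingale increments together with Doob's maximal inequality to bound $\sup_{t\leq T}\|\sum_k M_k\|$ in $L^2$, then apply Gronwall directly against the ODE solution. Your version is the standard ODE-method argument from stochastic approximation: it avoids the two-timescale blocking, yields uniform-in-time convergence on $[0,T]$ in one pass, and skips the intermediate Euler-scheme comparison; the paper's blocking achieves the same averaging effect but with more bookkeeping. One small point to make explicit if you write it up: Lipschitz continuity of $s\mapsto\E_{a\sim\pi^\text{noise}(a|s)}F(s,a)$ requires some regularity of $\pi^\text{noise}$ in $s$ (the paper assumes the mixed policy is $C^1$ and bounded), which you implicitly fold into "the stated smoothness assumptions."
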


\subsection{Algorithms for Deep Advantage Updating}
\label{subsec:algorithm}
\begin{algorithm}[ht]
  \caption{Deep Advantage Updating (Discrete actions)}
  \label{alg:dau}
\begin{algorithmic}
	\STATE \textbf{Inputs:}
	\STATE $\theta$ and $\psi$, parameters of
	$V_{\theta}$ and $\bar{A}_{\psi}$.
	\STATE $\pi^{\text{explore}}$ and $\nu_\deltat$ defining an exploration policy.
	\STATE \textbf{opt}$_V$, \textbf{opt}$_A$, $\alpha^V \deltat$ and $\alpha^A \deltat$ optimizers and learning rates.
	\STATE $\mathcal{D}$, buffer of transitions $(s, a, r, d, s')$, with $d$ the episode termination signal.
	\STATE $\deltat$ and $\gamma$, time discretization and discount factor.
	\STATE \textbf{nb\_epochs} number of epochs.
	\STATE \textbf{nb\_steps}, number of steps per epoch.
	\STATE
	\STATE Observe initial state $s^0$
	\STATE $t \gets 0$
	\FOR {$e=0, \textbf{nb\_epochs}$}
	\FOR {$j=1, \textbf{nb\_steps}$}
	\STATE $a^k \leftarrow \pi^{\text{explore}}(s^k, \nu^k_\deltat)$.
	\STATE Perform $a^k$ and observe $(r^{k+1}, d^{k+1}, s^{k+1})$.
	\STATE Store $(s^k, a^k, r^{k+1}, d^{k+1}, s^{k+1})$ in $\mathcal{D}$.
	\STATE $k \gets k + 1$
	\ENDFOR
	\FOR {$k=0, \text{nb\_learn}$}
	\STATE \text{Sample a batch of $N$ random transitions from $\mathcal{D}$}
	\STATE $Q^i \gets V_{\theta}(s^i) + \deltat\hspace{-.17em}\left(
	\bar{A}_{\psi}(s^i, a^i) - \max\limits_{a'}\bar{A}_{\psi}(s^i, a')\right)$
	\STATE $\tilde{Q^i} \gets r^i\deltat + (1 - d^i) \gamma^{\deltat} V_{\theta}(s'^i)$
	\STATE $\Delta \theta \gets \frac{1}{N}\sum\limits_{i=1}^N  \frac{\left(Q^i - \tilde{Q^i}\right)\partial_{\theta} V_{\theta}(s^i)}{\deltat}$
	\STATE $\Delta \psi \gets \frac{1}{N}\sum\limits_{i=1}^N \frac{\left(Q^i - \tilde{Q^i}\right)\partial_{\psi} \left(\bar{A}_{\psi}(s^i, a^i) - \max\limits_{a'}\bar{A}_{\psi}(s^i, a')\right) }{\deltat}$
	\STATE Update $\theta$ with \textbf{opt}$_1$, $\Delta \theta$ and learning rate $\alpha^V \deltat$.
	\STATE Update $\psi$ with \textbf{opt}$_2$, $\Delta \psi$ and learning rate $\alpha^A \deltat$.
	\ENDFOR
	\ENDFOR
\end{algorithmic}

\end{algorithm}

We learn $V_{\theta}$ and $A_{\psi}$ via suitable variants of $Q$-learning for continuous and
discrete action spaces. Namely, 
the true $A^\pi_\deltat$
and $V^\pi_\deltat$ of a near-continuous MDP with greedy exploitation
policy $\pi(s) \deq
\text{argmax}_{a'}A^\pi_\deltat(s, a')$
are the unique solution to the Bellman and consistency equations
\begin{align}
	V^{\pi}_\deltat(s) + \deltat\, A^{\pi}_\deltat(s, a) &=
	r\,\deltat + \gamma^{\deltat}  \,\E_{s'} V^{\pi}_\deltat(s')\label{eq:bellman_A}\\
	A^{\pi}_\deltat(s, \pi(s)) &= 0\label{eq:max_A}.
\end{align}
as seen in \ref{subsec:reparam}. Thus $V_{\theta}$
and $A_{\psi}$ are trained to approximately solve these equations.

Maximization over actions for $\pi$ is implemented exactly for discrete actions,
and, for continuous actions, approximated by a policy neural network $\pi_\phi(s)$ trained to maximize $A_\psi(s,
\pi_\phi(s))$, similarly to
\cite{ddpg}.

Eq.~\eqref{eq:max_A}
is directly verified by $A_{\psi}$, owing to the reparametrization
$A_\psi(s, a) = \bar{A}_\psi(s, a) - \bar{A}_\psi(s, \pi(s))$, described
in~\ref{subsec:reparam}.  To approximately verify~\eqref{eq:bellman_A}, the
corresponding squared Bellman residual is minimized by an approximate gradient descent.
The update equations when learning from a transition $(s, a, r, s')$, either from
an exploratory trajectory or from a replay buffer~\cite{dqn}, are
\begin{align}
	\delta Q_\deltat &\leftarrow A_\psi(s, a)\, \deltat - \left(r
	\,\deltat + \gamma^{\deltat}\, V_\theta(s') - V_\theta(s)\right)
	\label{eq:approx_deltaQ}\\
	\theta_\deltat &\leftarrow \theta_\deltat + \eta^V_\deltat
	\,\partial_{\theta} V_\theta(s) \,\frac{\delta Q_\deltat}{\deltat}
	\label{eq:approx_bellman_V}\\
	\psi_\deltat &\leftarrow \psi_\deltat + \eta^A_\deltat
	\,\partial_{\psi} A_\psi(s, a) \,\frac{\delta Q_\deltat}{\deltat}.
	\label{eq:approx_bellman_A}
\end{align} 
where the $\eta$'s are learning rates.
Appropriate scalings for the learning rates $\eta^V_\deltat$ and
$\eta^A_\deltat$ in terms of $\deltat$ to obtain a well defined continuous
limit are derived next.

\subsection{Scaling the Learning Rates}
\label{subsec:lr}
\newcommand{\iw}{1cm}
\begin{figure*}[ht]
	\includegraphics[width=\textwidth]{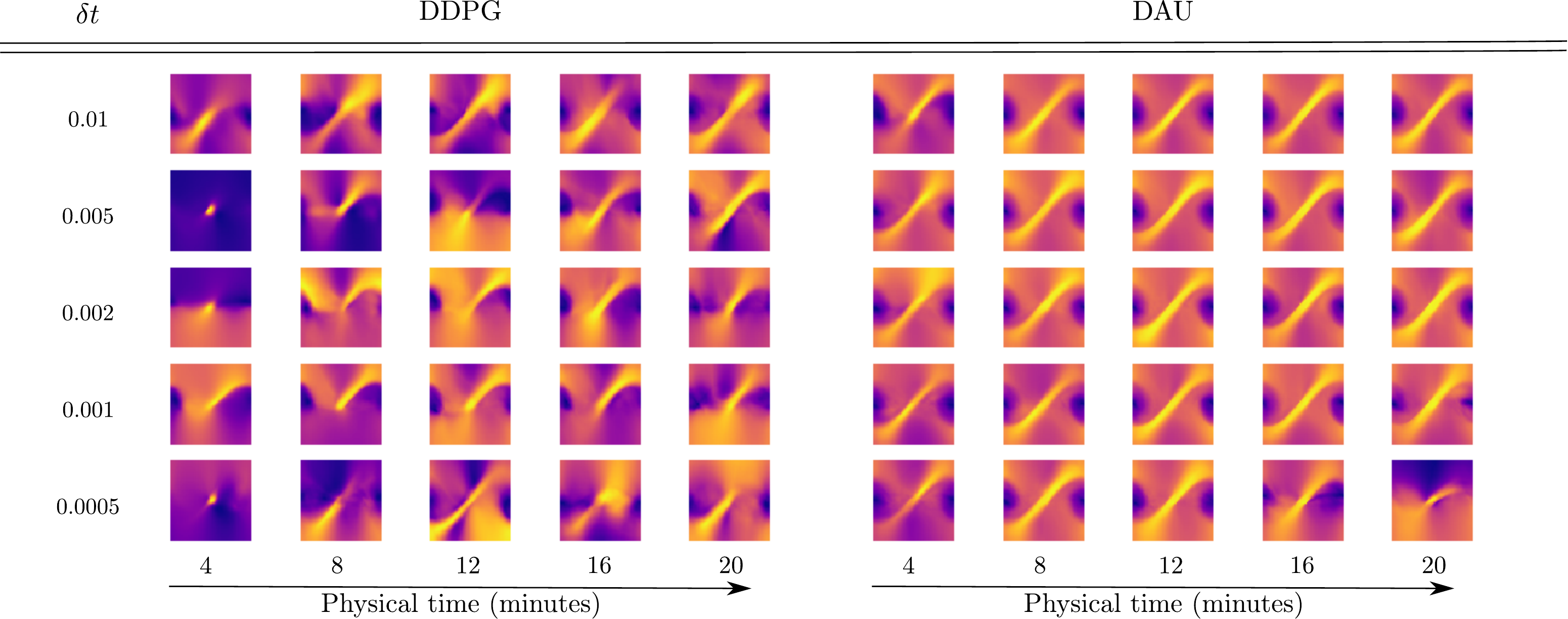}
	\caption{Value functions obtained by DDPG (unscaled version) and
	DAU at different instants in physical time of training on the
	pendulum swing-up environment. Each image represents the learnt
	value function (the $x$-axis is the angle, and the $y$-axis the angular velocity). The lighter the pixel, the higher the value.}
        \label{fig:pend}
\end{figure*}

For the algorithm to admit a continuous-time limit, the discrete-time trajectories
of parameters must converge to well-defined trajectories as $\deltat$ goes to
$0$.
This in turn imposes precise conditions on the scalings of the
learning rates.

Informally, in the parameter updates
\eqref{eq:approx_deltaQ}--\eqref{eq:approx_bellman_A}, the quantity $\delta
Q_\deltat$ is of order $O(\deltat)$, because $s'=s+O(\deltat)$ in a
near-continuous system. Therefore, $\delta
Q_\deltat/\deltat$ is $O(1)$, so that the gradients used to
update $\theta$ and $\psi$ are $O(1)$. Therefore, if the
learning rates are of order $\deltat$, one would expect 
the parameters $\theta$ and $\psi$ to change by $O(\deltat)$ in
each time interval $\deltat$, thus hopefully converging to smooth
continuous-time trajectories. The next theorem formally confirms that
learning rates of order $\deltat$ are the only possibility.



\begin{theorem}
	\label{th:cont-params}
Let $(s_t,a_t)$ be some exploration trajectory in a near-continuous MDP. Set the learning rates to $\eta^V_\deltat =
\alpha^V \deltat^\beta$ and $\eta^A = \alpha^A \deltat^\beta$ for some
$\beta\geq 0$, and learn the parameters $\theta$ and $\psi$ by iterating
\eqref{eq:approx_deltaQ}--\eqref{eq:approx_bellman_A} along the
trajectory $(s_t,a_t)$. Then, when
$\deltat\to 0$:
	\begin{itemize}
		\item If $\beta = 1$ the discrete parameter trajectories converge to continuous parameter
			trajectories.
		\item If $\beta>1$ the parameters stay at their initial
		values.
		\item If $\beta < 1$, the parameters can reach infinity
		in arbitrarily small physical time.
	\end{itemize}
\end{theorem}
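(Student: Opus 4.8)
The plan is to reduce all three cases to the asymptotic size of a single parameter update, which is governed entirely by the leading-order behaviour of $\delta Q_\deltat$. First I would Taylor-expand the temporal-difference residual \eqref{eq:approx_deltaQ}. Using the near-continuous dynamics $s'=s+F(s,a)\,\deltat+\smallo(\deltat)$ together with $\gamma^\deltat=1+\deltat\log\gamma+\bigO(\deltat^2)$ and the differentiability of $V_\theta$, the term $\gamma^\deltat V_\theta(s')-V_\theta(s)$ expands as $\deltat\big(\log\gamma\,V_\theta(s)+\nabla_s V_\theta(s)\cdot F(s,a)\big)+\smallo(\deltat)$, so that
\begin{equation}
\frac{\delta Q_\deltat}{\deltat}\;\xrightarrow[\deltat\to 0]{}\;g(s,a,\theta,\psi)\deq A_\psi(s,a)-\reward(s,a)-\log\gamma\,V_\theta(s)-\nabla_s V_\theta(s)\cdot F(s,a),
\end{equation}
an $\bigO(1)$ quantity. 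Substituting $\eta^V_\deltat=\alpha^V\deltat^\beta$ and $\eta^A_\deltat=\alpha^A\deltat^\beta$ into \eqref{eq:approx_bellman_V}--\eqref{eq:approx_bellman_A} and dividing by the physical time increment $\deltat$ rewrites the updates as
\begin{align}
\frac{\Delta\theta}{\deltat}&=\alpha^V\,\deltat^{\beta-1}\,\partial_\theta V_\theta(s)\,\frac{\delta Q_\deltat}{\deltat},&
\frac{\Delta\psi}{\deltat}&=\alpha^A\,\deltat^{\beta-1}\,\partial_\psi A_\psi(s,a)\,\frac{\delta Q_\deltat}{\deltat}.
\end{align}
The entire theorem is then read off the prefactor $\deltat^{\beta-1}$.

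For $\beta=1$ the prefactor equals $1$ and the two updates are exactly the explicit Euler scheme, with step size $\deltat$ in physical time, for the (non-autonomous, trajectory-driven) ODE
\begin{align}
\dot\theta&=\alpha^V\,\partial_\theta V_\theta(s_t)\,g(s_t,a_t,\theta,\psi),&
\dot\psi&=\alpha^A\,\partial_\psi A_\psi(s_t,a_t)\,g(s_t,a_t,\theta,\psi).
\end{align}
I would then invoke the standard convergence theorem for the explicit Euler method: under the smoothness assumptions, so that the right-hand side is Lipschitz in $(\theta,\psi)$ on compact sets and the consistency error per step is $\smallo(\deltat)$ uniformly, consistency plus stability yield convergence of the discrete iterates to the ODE solution on any compact physical-time interval.

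For $\beta>1$ the prefactor $\deltat^{\beta-1}\to 0$. Bounding $\partial_\theta V_\theta$, $\partial_\psi A_\psi$ and $\delta Q_\deltat/\deltat$ by a constant $C$ on the region visited, the total displacement over a physical horizon $T$ is at most $(T/\deltat)\cdot\alpha^V\deltat^\beta C=\alpha^V C\,T\,\deltat^{\beta-1}\to 0$, so the parameters stay frozen at their initial values; a short bootstrap confirms the bounding region is indeed never left, since the iterates barely move. For $\beta<1$ the prefactor $\deltat^{\beta-1}\to\infty$, and I would rescale algorithmic progress by $\tau\deq\deltat^{\beta-1}t$, under which the physical interval $[0,T]$ is stretched to $[0,T\deltat^{\beta-1}]\to[0,\infty)$ while physical time advances by only $\deltat^{1-\beta}$ per unit $\tau$; hence the driving trajectory freezes at $(s_0,a_0)$ and the rescaled updates converge to the autonomous ODE $\tfrac{d\theta}{d\tau}=\alpha^V\partial_\theta V_\theta(s_0)\,g(s_0,a_0,\theta,\psi)$, and likewise for $\psi$. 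It then suffices to exhibit one instance whose trajectory is unbounded: taking $A_\psi$ linear in $\psi$, say $A_\psi(s,a)=\psi^\top\phi(s,a)$, gives $\tfrac{d}{d\tau}(\psi^\top\phi)\approx c\,(\psi^\top\phi)$ with $c=\alpha^A\|\phi(s_0,a_0)\|^2>0$, so that $\psi^\top\phi$ grows like $(\psi^\top\phi)_0\exp(c\,T\,\deltat^{\beta-1})\to\infty$ for every fixed $T>0$; the parameters therefore reach arbitrarily large values within arbitrarily small physical time.

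The main obstacle is the $\beta=1$ case. Euler convergence hinges on two uniform controls that the pointwise expansion above does not by itself supply: the one-step consistency error, collecting the second-order Taylor remainder of $V_\theta$, the $\bigO(\deltat^2)$ in $\gamma^\deltat$, and the $\smallo(\deltat)$ in $s'$, must be $\smallo(\deltat)$ \emph{uniformly} over the compact set of parameters traversed, and the iterates must be kept a priori in such a compact set so that the Lipschitz constant of the right-hand side is finite. Establishing this a priori bound, typically by a Gr\"onwall-type argument applied to the Euler error recursion combined with the stability of the limiting ODE, is where the real work lies; the cases $\beta\neq 1$ are comparatively soft, requiring only the crude per-step size estimate $\Delta\theta,\Delta\psi=\bigO(\deltat^\beta)$.
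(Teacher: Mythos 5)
For $\beta=1$ and $\beta>1$ your argument is essentially the paper's: Taylor-expand $\delta Q_\deltat/\deltat$ to an $\bigO(1)$ limit, recognize the updates as an explicit Euler scheme with step $\deltat^{\beta}$ driven by the trajectory, and conclude convergence to the limiting ODE (resp.\ to the trivial ODE $\dot\theta=\dot\psi=0$) via the same Gr\"onwall/Euler machinery used for the state trajectories, under bounded gradients and Hessians of the approximators. Your identification of where the real work lies (uniform one-step consistency and an a priori compactness bound on the iterates) is exactly right and is what the paper's appendix supplies.

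The gap is in the $\beta<1$ case. Your time-rescaling $\tau=\deltat^{\beta-1}t$ freezes the driving transition at a single $(s_0,a_0)$ with $s'\to s_0$, and you then claim exponential blow-up of a linear advantage head $A_\psi=\psi^\top\phi$. But on a single frozen transition the update \eqref{eq:approx_bellman_A} is (up to the sign convention of the optimizer) a plain gradient step on $\tfrac12\bigl(A_\psi(s_0,a_0)\,\deltat-c\bigr)^2$ with $c$ independent of $\psi$: the induced dynamics of $\psi^\top\phi$ is $\frac{d}{d\tau}(\psi^\top\phi)=\mp\alpha^A\|\phi\|^2\,(\psi^\top\phi-\text{const})$, which under the descent orientation is a \emph{contraction} toward a finite fixed point, not a blow-up; your positive feedback comes entirely from reading the $+$ sign in \eqref{eq:approx_bellman_A} as ascent on the residual. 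Since the theorem's third bullet is existential, you need a witness that provably diverges, and this one likely does not. The paper's construction is different in a way that matters: it keeps the state \emph{moving} ($s_t=\sin t$, zero reward, $V_\theta(s)=\theta s$) and exploits the classical instability of semi-gradient TD with linear function approximation, where the per-step multiplier is $1+\alpha\deltat^{\beta}\,\bigO(k\deltat)>1$ because the bootstrapped target $\gamma^{\deltat}V_\theta(s')$ shifts with the state; summing $\log\theta$ over $K_\deltat\approx\deltat^{-(\beta+3)/4}$ steps gives divergence while the physical time $K_\deltat\,\deltat\to0$. Your frozen-state limit discards precisely this mechanism (the first-order motion of the state), so you should either adopt a moving-trajectory example of this type or carefully justify a sign/orientation under which your frozen-transition ODE is genuinely expanding. (A secondary, fixable point: even with a divergent limiting ODE you cannot track it out to $\tau=T\deltat^{\beta-1}\to\infty$ with compact-interval Euler bounds; you should fix a large $\mathcal{T}$, conclude the iterate near $\tau=\mathcal{T}$ exceeds $e^{c\mathcal{T}}$ within physical time $\mathcal{T}\deltat^{1-\beta}\to0$, and then let $\mathcal{T}\to\infty$.)
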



The resulting algorithm with suitable scalings,
\emph{Deep Advantage Updating} (DAU), is specified in Alg.~\ref{alg:dau} for
discrete actions, and in the Supplementary for continuous
actions.


\section{Experiments}
\label{sec:exp}
\begin{figure*}[ht]
	\centering
	\includegraphics[width=\textwidth]{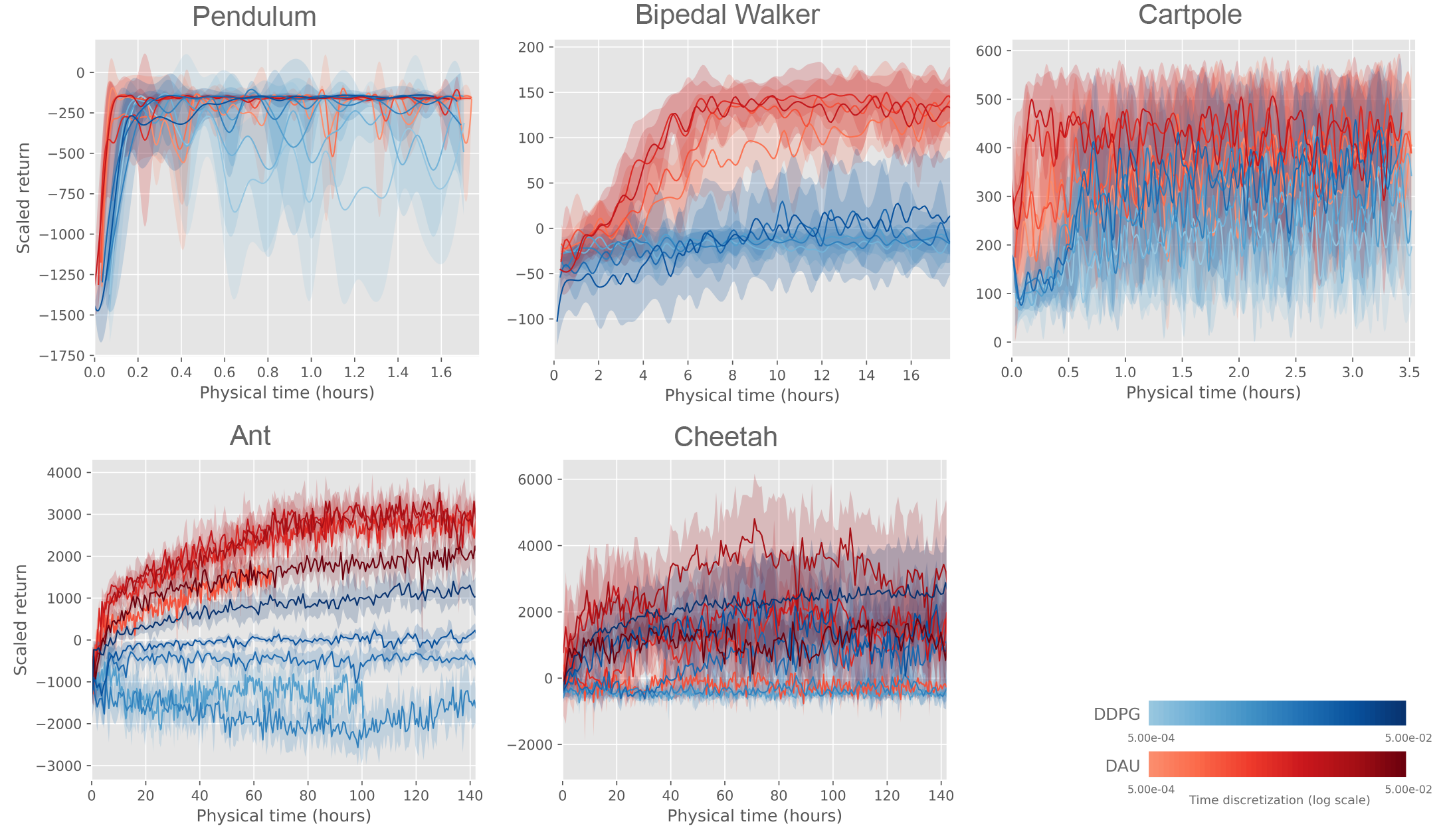}
	\label{fig:full_results}
	\vspace{-2em}
	\caption{Learning curves for DAU and DDPG on classic control benchmarks for  various time discretization $\deltat$: Scaled return as a function of the physical time spent in the environment.}
\end{figure*}

The experiments provided here are specifically aimed at showing that
the proposed method, DAU, works efficiently over a wide range of time
discretizations, without specific tuning, while standard deep $Q$-learning
approaches do not. DAU is compared to DDPG for continuous actions and to DQN
for discrete actions.  As mentionned earlier, we do not study the time
discretization invariance of on-policy methods (A3C, PPO, TRPO...).

In all setups, we use the algorithms described in
Alg.~\ref{alg:dau} and Supplementary Alg.~1. The variants of DDPG and DQN
used are described in the Supplementary, as well as all hyperparameters. We tested two different setups for DDPG and DQN.
In one, we scaled the discount factor (to avoid shortsightedness with small $\deltat$), but
left all other hyperparameters constant across time discretizations.
In the other, we used the properly rescaled discount
factor and reward from Eq.~\eqref{eq:def-gamma},
as well as $O(\deltat)$ learning rates for RMSProp.  The first variant yields slightly
better results, and is presented here, with the second variant in the
Supplementary. For all setups, quantitative results are averaged over five runs.

Let us stress that the quantities plotted are rescaled to make comparison
possible across different timesteps. For example,
returns are given in terms of the discretized return $R_\deltat$ as defined in \eqref{eq:discretized-return},\footnote{This mostly amounts to scaling rewards
by a factor $\deltat$ when this scaling is not naturally done in the environment. Environment-specific
details are given in the Supplementary.} and, most notably, time elapsed is always given in
\emph{physical} time, i.e., the amount of time that the agent spent
interacting with the environment (this is not the number of steps).

\paragraph{Qualitative experiments: Visualizing policies and values.}
To provide qualitative results, and check robustness to time
discretization both in terms of returns and in terms
of convergence of the approximate value function and policies, we first provide results on the simple pendulum environment
from the OpenAI Gym classic control suite.  The state space is of
dimension $2$. We visualize both the learnt value and policy functions by
plotting, for each point of the phase diagram $(\theta, \dot{\theta})$,
its value and policy. The results are presented in
Fig.~\ref{fig:pend} (value function) and Figs.~1, 2,  3 in
Supplementary.

We plot the learnt policy at several instants in physical time
during training, for various time discretizations
$\deltat$, for both DAU and DDPG. With DAU, the agent's policy and value
function quickly converge for every time discretization without specific
tuning. On the contrary, with DDPG, learning of both value function and
policy vary greatly from one discretization to another.

\paragraph{Quantitative experiments.}
We benchmark DAU against DDPG on classic control benchmarks: Pendulum,
Cartpole, BipedalWalker, Ant, and Half-Cheetah environments from OpenAI Gym. On
Pendulum, Bipedal Walker and Ant, DAU is quite robust to variations of
$\deltat$ and displays reasonable performance in all cases. On the other hand,
DDPG's performance varies with $\deltat$; performance either degrades as
$\deltat$ decreases (Ant, Cheetah), or becomes more variable as learning
progresses (Pendulum) for small $\deltat$. On Cartpole, noise dominates,
making interpretation difficult. On Half-Cheetah, DAU is not clearly
invariant to time discretization. This could be explained by the multiple
suboptimal regimes that coexist in the Half-Cheetah environment (walking on the
head, walking on the back), which create discontinuities in the value
function (see Discussion).

%


\section{Discussion}
\label{sec:discussions}

The method derived in this work is theoretically invariant to time
discretization, and
indeed seems to yield improved timestep robustness
on various environments, e.g.,
simple locomotion tasks.  However, on some environments there is still
room for improvment. We discuss some of the issues
that could explain this theoretical/practical discrepancy.

Note that Alg.~\ref{alg:dau} requires
knowledge of the timestep $\deltat$. In most environments, this is
readily available, or even directly set by the practitioner: depending on
the environment it is given by the frame rate, the maximum frequency of
actuators or observation acquisition, the timestep of a physics
simulator, etc. 

\paragraph{Smoothness of the value function.} In our proofs, $V^\pi$ is
assumed to be continuously differentiable.  This 
hypothesis is not always satisfied in practice. 
For instance, in the pendulum swing-up environment, depending on initial
position and momentum, the optimal policy may need to oscillate before
reaching the target state. The optimal value
function is discontinuous at the boundary between states where
oscillations are needed and those where they are not.
This results in non-infinitesimal advantages
for actions on the boundary. In such environments where a given policy
has different regimes depending on the initial state, the continuous-time
analysis only holds almost-everywhere.

\paragraph{Memory buffer size.}~Thm.~\ref{th:cont-params} is stated for
transitions sampled sequentially from a fixed trajectory. In practice,
transitions are sampled from a memory replay buffer, to prevent excessive correlations.
We used a fixed-size circular buffer, filled
with samples from a single growing exploratory trajectory. In
our experiments, the same buffer size was used for all time discretizations. 
Thus the physical-time freshness of samples in the buffer varies with the
time discretization, and in the strictest sense using a fixed-size buffer
breaks timestep invariance. A memory-intensive option would be to use a
buffer of size $\frac{1}{\deltat}$ (fixed memory in physical time).

\paragraph{Near-continuous reinforcement learning and RMSProp.}
RMSProp~\cite{rmsprop} divides gradient steps by the square root of a
moving average estimate of the second moment of gradients.
This may interact with the learning rate scaling discussed above. In
deterministic environments, gradients typically scale as $\bigO(1)$ in
terms of $\deltat$, as seen in \eqref{eq:approx_bellman_A}.  In that case, RMSProp
preconditioning has no effect on the suitable order of magnitude for learning
rates. However, in near continuous \emph{stochastic} environments
(Eq.~\ref{eq:sde}), variance of $\delta
Q_{\deltat}/\deltat$ and of the gradients typically scales as
$\bigO\left(1/\deltat\right)$. With a fixed batch size,
RMSProp will multiply gradients by a factor $\bigO(\sqrt{\deltat})$. In
that case,
learning rates need only be scaled as $\sqrt{\deltat}$ instead of
$\deltat$.

More generally, the continuous-time analysis should in principle be repeated for every component of  
a system. For instance, if a recurrent model is used to handle state
memory or partial observability, care should be taken that the model is
able to maintain memory for a non-infinitesimal physical time when
$\deltat\to 0$ (see e.g.~\citealt{chronornn}).

\section{Conclusion}
$Q$-learning methods have been found to fail to learn with small time
steps, both theoretically and empirically. A theoretical analysis
help in building a practical off-policy deep RL algorithm with better robustness to time
discretization. This robustness is confirmed empirically.
%

\section*{Acknowledgments}
We would like to thank Harsh Satija and Joelle Pineau for their useful remarks and comments.
\bibliography{icml_drau.bib}

\begin{thebibliography}{18}
\providecommand{\natexlab}[1]{#1}
\providecommand{\url}[1]{\texttt{#1}}
\expandafter\ifx\csname urlstyle\endcsname\relax
  \providecommand{\doi}[1]{doi: #1}\else
  \providecommand{\doi}{doi: \begingroup \urlstyle{rm}\Url}\fi

\bibitem[Baird(1994)]{adv_upd}
Baird, L.~C.
\newblock Reinforcement learning in continuous time: Advantage updating.
\newblock In \emph{Neural Networks, 1994. IEEE World Congress on Computational
  Intelligence., 1994 IEEE International Conference on}, volume~4, pp.\
  2448--2453. IEEE, 1994.

\bibitem[Brockman et~al.(2016)Brockman, Cheung, Pettersson, Schneider,
  Schulman, Tang, and Zaremba]{gym}
Brockman, G., Cheung, V., Pettersson, L., Schneider, J., Schulman, J., Tang,
  J., and Zaremba, W.
\newblock Openai gym, 2016.

\bibitem[Doya(2000)]{cont_rl}
Doya, K.
\newblock Reinforcement learning in continuous time and space.
\newblock \emph{Neural computation}, 12\penalty0 (1):\penalty0 219--245, 2000.

\bibitem[Henderson et~al.(2017)Henderson, Islam, Bachman, Pineau, Precup, and
  Meger]{drl_matter}
Henderson, P., Islam, R., Bachman, P., Pineau, J., Precup, D., and Meger, D.
\newblock Deep reinforcement learning that matters.
\newblock \emph{CoRR}, abs/1709.06560, 2017.
\newblock URL \url{http://arxiv.org/abs/1709.06560}.

\bibitem[Lillicrap et~al.(2015)Lillicrap, Hunt, Pritzel, Heess, Erez, Tassa,
  Silver, and Wierstra]{ddpg}
Lillicrap, T.~P., Hunt, J.~J., Pritzel, A., Heess, N., Erez, T., Tassa, Y.,
  Silver, D., and Wierstra, D.
\newblock Continuous control with deep reinforcement learning.
\newblock \emph{CoRR}, abs/1509.02971, 2015.
\newblock URL \url{http://arxiv.org/abs/1509.02971}.

\bibitem[Mnih et~al.(2015)Mnih, Kavukcuoglu, Silver, Rusu, Veness, Bellemare,
  Graves, Riedmiller, Fidjeland, Ostrovski, et~al.]{dqn}
Mnih, V., Kavukcuoglu, K., Silver, D., Rusu, A.~A., Veness, J., Bellemare,
  M.~G., Graves, A., Riedmiller, M., Fidjeland, A.~K., Ostrovski, G., et~al.
\newblock Human-level control through deep reinforcement learning.
\newblock \emph{Nature}, 518\penalty0 (7540):\penalty0 529, 2015.

\bibitem[Mnih et~al.(2016)Mnih, Badia, Mirza, Graves, Lillicrap, Harley,
  Silver, and Kavukcuoglu]{a3c}
Mnih, V., Badia, A.~P., Mirza, M., Graves, A., Lillicrap, T., Harley, T.,
  Silver, D., and Kavukcuoglu, K.
\newblock Asynchronous methods for deep reinforcement learning.
\newblock In \emph{International conference on machine learning}, pp.\
  1928--1937, 2016.

\bibitem[Munos \& Bourgine(1998)Munos and Bourgine]{MunosBourgines98}
Munos, R. and Bourgine, P.
\newblock Reinforcement learning for continuous stochastic control problems.
\newblock In \emph{Advances in neural information processing systems}, pp.\
  1029--1035, 1998.

\bibitem[OpenAI(2018{\natexlab{a}})]{hand_control}
OpenAI.
\newblock Learning dexterous in-hand manipulation.
\newblock \emph{CoRR}, abs/1808.00177, 2018{\natexlab{a}}.
\newblock URL \url{http://arxiv.org/abs/1808.00177}.

\bibitem[OpenAI(2018{\natexlab{b}})]{openai_five}
OpenAI.
\newblock Openai five.
\newblock \url{https://blog.openai.com/openai-five/}, 2018{\natexlab{b}}.

\bibitem[Schulman et~al.(2015)Schulman, Levine, Abbeel, Jordan, and
  Moritz]{trpo}
Schulman, J., Levine, S., Abbeel, P., Jordan, M., and Moritz, P.
\newblock Trust region policy optimization.
\newblock In \emph{International Conference on Machine Learning}, pp.\
  1889--1897, 2015.

\bibitem[Schulman et~al.(2017)Schulman, Wolski, Dhariwal, Radford, and
  Klimov]{ppo}
Schulman, J., Wolski, F., Dhariwal, P., Radford, A., and Klimov, O.
\newblock Proximal policy optimization algorithms.
\newblock \emph{arXiv preprint arXiv:1707.06347}, 2017.

\bibitem[Silver et~al.(2017)Silver, Hubert, Schrittwieser, Antonoglou, Lai,
  Guez, Lanctot, Sifre, Kumaran, Graepel, et~al.]{alphazero}
Silver, D., Hubert, T., Schrittwieser, J., Antonoglou, I., Lai, M., Guez, A.,
  Lanctot, M., Sifre, L., Kumaran, D., Graepel, T., et~al.
\newblock Mastering chess and shogi by self-play with a general reinforcement
  learning algorithm.
\newblock \emph{arXiv preprint arXiv:1712.01815}, 2017.

\bibitem[Tallec \& Ollivier(2018)Tallec and Ollivier]{chronornn}
Tallec, C. and Ollivier, Y.
\newblock Can recurrent neural networks warp time?
\newblock \emph{arXiv preprint arXiv:1804.11188}, 2018.

\bibitem[Tieleman \& Hinton(2012)Tieleman and Hinton]{rmsprop}
Tieleman, T. and Hinton, G.
\newblock Lecture 6.5-rmsprop: Divide the gradient by a running average of its
  recent magnitude.
\newblock \emph{COURSERA: Neural networks for machine learning}, 4\penalty0
  (2):\penalty0 26--31, 2012.

\bibitem[Uhlenbeck \& Ornstein(1930)Uhlenbeck and Ornstein]{orn-uhl}
Uhlenbeck, G.~E. and Ornstein, L.~S.
\newblock On the theory of the {B}rownian motion.
\newblock \emph{Physical review}, 36\penalty0 (5):\penalty0 823, 1930.

\bibitem[Wang et~al.(2015)Wang, Schaul, Hessel, Van~Hasselt, Lanctot, and
  De~Freitas]{dueling_nets}
Wang, Z., Schaul, T., Hessel, M., Van~Hasselt, H., Lanctot, M., and De~Freitas,
  N.
\newblock Dueling network architectures for deep reinforcement learning.
\newblock \emph{arXiv preprint arXiv:1511.06581}, 2015.

\bibitem[Zhang et~al.(2018)Zhang, Wu, and Pineau]{drl_matter_bis}
Zhang, A., Wu, Y., and Pineau, J.
\newblock Natural environment benchmarks for reinforcement learning.
\newblock \emph{CoRR}, abs/1811.06032, 2018.
\newblock URL \url{http://arxiv.org/abs/1811.06032}.

\end{thebibliography}
\bibliographystyle{icml2019}
\vfill

\pagebreak
\appendix



\section{Proofs}

We now give proofs for all the results presented in the paper. Most
proofs follow standard patterns from calculus and numerical schemes for
differential equations, except for Theorem~\ref{thm:policyimprovement},
which uses an argument specific to reinforcement learning to prove that
the continuous-time advantage function contains all the necessary
information for policy improvement.

The first result presented is a proof of convergence for discretized
trajectories.
\begin{lemma}
	Let $F\from {\cal S} \times {\cal A} \rightarrow \bb{R}^n$ and $\pi\from {\cal S}
	\rightarrow {\cal A}$ be the dynamic and policy functions. Assume that,
	for any $a$, $s \rightarrow F(s, a)$ and $s \rightarrow F(s, \pi(s))$
	are ${\cal C}^1$, bounded and $K$-lipschitz.  For
	a given $s_0$, define the trajectory $(s_t)_{t\geq 0}$ as the unique
	solution of the differential equation
	\begin{equation}
		\frac{ds_t}{dt} = F(s_t, \pi(s_t)).
		\label{eq:diff}
	\end{equation}
	For any $\deltat > 0$, define the discretized trajectory
	$(s_\deltat^k)_k$ which amounts to maintaining each action for a
	time interval $\deltat$; it is defined by induction as $s_\deltat^0 = s_0$,
	$s_\deltat^{k + 1}$ is the value at time $\deltat$ of
	the unique solution of
	\begin{equation}
		\frac{d\tilde{s}_t}{dt} = F(\tilde{s}_t, \pi(s_\deltat^k))
		\label{eq:diff_2}
	\end{equation}
	with initial point $s_\deltat^k$.
	Then, there exists $C > 0$ such that, for every $t \geq 0$
	\begin{equation}
		\|s_t - s_\deltat^{\lfloor \frac{t}{\deltat} \rfloor}\|
		\leq \deltat \frac{C}{K}e^{Kt}.
	\end{equation}
	Therefore, discretized trajectories converge pointwise to continuous trajectories.
	\label{th:traj-conv}
\end{lemma}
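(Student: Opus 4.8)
The plan is to recognize this as the standard convergence result for a consistent one-step integrator --- here the ``freeze the action for a time $\deltat$'' scheme --- and to prove it by a Gronwall argument applied to a continuous-time interpolation of the discrete trajectory. Write $g(s)\deq F(s,\pi(s))$, which by hypothesis is bounded and $K$-Lipschitz, and let $M$ be a uniform bound on $\|F\|$ (hence also on $\|g\|$). First I would introduce the interpolant $\bar s_t$ defined on each interval $[k\deltat,(k+1)\deltat]$ as the solution of $\dot{\bar s}_t = F(\bar s_t,\pi(s^k_\deltat))$ started from $\bar s_{k\deltat}=s^k_\deltat$; by construction $\bar s_{k\deltat}=s^k_\deltat$, so it interpolates the discrete points exactly while moving at most $M\deltat$ inside each step.

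The core estimate compares $s_t$ with $\bar s_t$. For $t\in[k\deltat,(k+1)\deltat)$ I would write $\frac{d}{dt}(s_t-\bar s_t)=g(s_t)-F(\bar s_t,\pi(s^k_\deltat))$ and split it, crucially routing everything through the frozen point $s^k_\deltat$:
\[
g(s_t)-F(\bar s_t,\pi(s^k_\deltat)) = \bigl(g(s_t)-g(s^k_\deltat)\bigr) + \bigl(F(s^k_\deltat,\pi(s^k_\deltat))-F(\bar s_t,\pi(s^k_\deltat))\bigr).
\]
Setting $\varepsilon_t\deq\|s_t-\bar s_t\|$, I bound the first term by $K\|s_t-s^k_\deltat\|\le K(\varepsilon_t+M\deltat)$ using Lipschitzness of $g$, and the second by $K\|s^k_\deltat-\bar s_t\|\le KM\deltat$ using Lipschitzness of $F(\cdot,a)$ at fixed action together with $\|\bar s_t-s^k_\deltat\|\le M\deltat$. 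This yields the differential inequality $\dot\varepsilon_t\le K\varepsilon_t+2KM\deltat$, valid almost everywhere since $\varepsilon_t$ is absolutely continuous. Gronwall with $\varepsilon_0=0$ then gives $\varepsilon_t\le 2M\deltat\,(e^{Kt}-1)\le 2M\deltat\,e^{Kt}$. Finally, since $s^{\lfloor t/\deltat\rfloor}_\deltat=\bar s_{\lfloor t/\deltat\rfloor\deltat}$ differs from $\bar s_t$ by at most $M\deltat$, the triangle inequality upgrades this to $\|s_t-s^{\lfloor t/\deltat\rfloor}_\deltat\|\le 3M\deltat\,e^{Kt}$, which is the claimed bound with $C=3MK$; pointwise convergence as $\deltat\to 0$ follows immediately for each fixed $t$.

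The main obstacle --- and the one point where care is genuinely needed --- is the decomposition above: the hypotheses provide Lipschitz control only in the \emph{state} variable (for a fixed action, and for the composite map $g$), never in the action. Hence one cannot directly compare $F(\bar s_t,\pi(\bar s_t))$ with $F(\bar s_t,\pi(s^k_\deltat))$, and the naive split fails. The fix is to compare at the frozen point $s^k_\deltat$, where the identity $F(s^k_\deltat,\pi(s^k_\deltat))=g(s^k_\deltat)$ reconciles the two velocity fields and lets both error terms be absorbed by state-Lipschitzness alone. Everything else --- boundedness giving the $O(\deltat)$ per-step displacement, the Gronwall integration, and the final triangle inequality --- is routine.
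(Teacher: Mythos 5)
Your proof is correct, and it takes a noticeably different route from the paper's. The paper works entirely at the grid points: it writes both the exact flow and the frozen-action flow over one step via Taylor's integral formula with a second-order remainder, bounds the remainders by appealing to bounded second derivatives of the trajectories, obtains the discrete recursion $e^{k+1}_\deltat \le (1+K\deltat)e^k_\deltat + C\deltat^2$, solves it by induction (a discrete Gronwall), and only at the very end patches in the off-grid times by a triangle inequality. You instead build a continuous interpolant that agrees with the discrete trajectory at grid points, and run a continuous-time Gronwall argument on $\varepsilon_t = \|s_t - \bar s_t\|$; the decisive move is your decomposition through the frozen point $s^k_\deltat$, which lets both error terms be controlled by state-Lipschitzness alone (the same issue the paper sidesteps by never comparing the two vector fields directly, only the two one-step maps). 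What your version buys is that it needs only boundedness and the Lipschitz constants --- no second-order Taylor expansion and hence no appeal to bounded second derivatives of the solutions, a step the paper justifies somewhat loosely --- and it produces a fully explicit constant $C = 3MK$. What the paper's version buys is a slightly shorter local truncation estimate and a self-contained discrete induction that it then reuses verbatim in the proof of the learning-rate scaling theorem. Both arguments are sound instances of the classical Euler-scheme convergence proof.
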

\begin{proof}
The proof mostly follos the classical argument for convergence of the
Euler scheme for differential equations.
For any $k$, define
\begin{equation}
	e_\deltat^k = \|s_\deltat^k - s_{\deltat k}\|.
\end{equation}
Let $\tilde s_t$ be the solution of Eq.~\eqref{eq:diff_2} with initial state $s^k_\delta$. This $\tilde s_t$ is ${\cal
C}^2$ on $[0, \deltat]$. Consequently, the Taylor integral formula gives
\begin{align}
	s_\deltat^{k + 1} &= s_\deltat^k + F(s_\deltat^k, \pi(s_\deltat^k)) \deltat +
	\int_{0}^\deltat (\deltat - t) \frac{d^2 \tilde{s}_t}{dt^2} dt\\
	s_{\deltat(k + 1)} &= s_{\deltat k} + F(s_{\deltat k}, \pi(s_{\deltat k})) \deltat +
	\int_0^\deltat (\deltat - t) \frac{d^2 s_{t + \deltat k}}{dt^2} dt.
\end{align}
Now, both $d^2 s_t/dt^2$ and $d^2 \tilde{s}_t/dt^2$ are uniformly bounded, by
boundedness and Lipschitzness of $s \rightarrow F(s, \pi(s))$ and $s
\rightarrow F(s, \pi(s_\deltat^k))$. Consequently, there exists $C$ such that
\begin{align}
	e_\deltat^{k + 1} &\leq \|s_\deltat^k - s_{\deltat k}\| + \|F(s_\deltat^k, \pi(s_\deltat^k)) - F(s_{\deltat k}, \pi(s_{\deltat k}))\| \deltat + C \deltat^2\\
			  &\leq (1 + K \deltat) e_\deltat^k + C \deltat^2.
\end{align}
Now, it is easy to prove by induction that
\begin{equation}
	e_\deltat^k \leq (1 + K \deltat)^k (e_\deltat^0 + \frac{C}{K} \deltat) - \frac{C}{K}\deltat.
\end{equation}
As $e_\deltat^0 = 0$, this translates to
\begin{align}
	e_\deltat^k &\leq ((1 + K \deltat)^k - 1) \deltat \frac{C}{K}\\
		    &\leq (e^{K \deltat k} - 1) \deltat \frac{C}{K}.
\end{align}
Consequently,
\begin{equation}
	e_\deltat^{\lfloor t / \deltat \rfloor} \leq (e^{K(t + \deltat)} - 1) \deltat \frac{C}{K}.
	\label{eq:bound_err}
\end{equation}
Finally, by boundedness, of $s \rightarrow F(s, \pi(s))$, there exists $C'$ such that
\begin{equation}
	\|s_{\deltat\lfloor t / \deltat \rfloor} - s_t \| \leq \deltat C'.
	\label{eq:close_s}
\end{equation}
Combining Eq.~\eqref{eq:close_s} with Eq.~\eqref{eq:bound_err}, one can find $C''$ such that
\begin{equation}
	\|s_t - s_\deltat^{\lfloor t / \deltat \rfloor}\| \leq \deltat \frac{C''}{K} e^{Kt}.
\end{equation}

\end{proof}

In what follows, we assume that the continuous-time reward function $r\from {\cal S} \times {\cal A} \rightarrow \bb{R}$
is bounded, to ensure existence of $V^\pi$ and $V^\pi_\deltat$ for all $\deltat$
.\begin{theorem}
	Assume that $r\from {\cal S} \times {\cal A} \rightarrow \bb R$ is bounded, and
	that $s \rightarrow r(s, \pi(s))$ is $L_r$-Lipschitz continuous, then
	for all $s \in {\cal S}$, one has
	$V^\pi_\deltat(s) = V^\pi(s) + \smallo(1)$
	when $\deltat\to 0$.
	\label{th:conv-value}
\end{theorem}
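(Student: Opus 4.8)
The plan is to reduce the statement to a comparison between an integral and its piecewise-constant (Riemann-type) approximation, and then to control separately the error on the trajectory and the error on the integrand. Since everything is deterministic, I would write $V^\pi(s)=\int_0^\infty \gamma^t\, r(s_t,\pi(s_t))\,dt$ for the continuous trajectory $(s_t)$ solving \eqref{eq:diff}, and $V^\pi_\deltat(s)=\int_0^\infty \tilde g_\deltat(t)\,dt$, where $\tilde g_\deltat$ is the step function equal to $\gamma^{\lfloor t/\deltat\rfloor \deltat}\, r(s_\deltat^{\lfloor t/\deltat\rfloor},\pi(s_\deltat^{\lfloor t/\deltat\rfloor}))$ on each interval $[k\deltat,(k+1)\deltat)$; this is just the discretized return \eqref{eq:discretized-return} rewritten as an integral. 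Setting $g(t)\deq \gamma^t r(s_t,\pi(s_t))$, it then suffices to show that $\int_0^\infty |g(t)-\tilde g_\deltat(t)|\,dt \to 0$.

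First I would dispose of the tail with an $\eps/3$ argument. Because $0<\gamma<1$ and $r$ is bounded, both $|g(t)|$ and $|\tilde g_\deltat(t)|$ are dominated by $\gamma^{-\deltat}\,\gamma^t\,\|r\|_\infty$, the factor $\gamma^{-\deltat}$ accounting for the flooring in the discrete exponent and staying bounded as $\deltat\to 0$. Hence, for a horizon $T$, the tail contribution $\int_T^\infty |g-\tilde g_\deltat|\,dt$ is at most a constant times $\gamma^T/(-\log\gamma)$, which can be made smaller than any prescribed $\eps$ by choosing $T$ large, uniformly over all small $\deltat$.

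On the compact interval $[0,T]$ I would split the integrand as $g(t)-\tilde g_\deltat(t)=\gamma^t\big(r(s_t,\pi(s_t))-r(s_\deltat^{\lfloor t/\deltat\rfloor},\pi(s_\deltat^{\lfloor t/\deltat\rfloor}))\big)+\big(\gamma^t-\gamma^{\lfloor t/\deltat\rfloor\deltat}\big)\,r(s_\deltat^{\lfloor t/\deltat\rfloor},\pi(s_\deltat^{\lfloor t/\deltat\rfloor}))$. The first term is $\bigO(\deltat)$ uniformly on $[0,T]$: since $|\gamma^t|\le 1$ and $s\mapsto r(s,\pi(s))$ is $L_r$-Lipschitz, it is bounded by $L_r\|s_t-s_\deltat^{\lfloor t/\deltat\rfloor}\|$, which Lemma~\ref{th:traj-conv} controls by $\deltat\,\frac{C''}{K}e^{Kt}\le \deltat\,\frac{C''}{K}e^{KT}$. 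The second term is $\bigO(\deltat)$ as well, because $t\mapsto\gamma^t$ is Lipschitz on $[0,T]$ with constant $|\log\gamma|$, $|t-\lfloor t/\deltat\rfloor\deltat|\le\deltat$, and $r$ is bounded. Integrating over $[0,T]$ gives a bound of the form $T\cdot\bigO(\deltat)$, which tends to $0$ for fixed $T$. Choosing $T$ first to kill the tails and then $\deltat$ small enough completes the $\eps/3$ argument and yields $V^\pi_\deltat(s)=V^\pi(s)+\smallo(1)$.

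The main obstacle is precisely this interplay between the infinite horizon and the exponentially growing trajectory-error bound $e^{Kt}$ coming from Lemma~\ref{th:traj-conv}: one cannot apply the Lipschitz/trajectory estimate uniformly over all of $[0,\infty)$, so the geometric decay of $\gamma^t$ must be exploited separately to truncate the horizon before invoking the Lemma on the remaining compact piece. Everything else is routine, essentially the classical statement that a Riemann sum converges to its integral.
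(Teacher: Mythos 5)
Your proof is correct and follows essentially the same route as the paper's: rewrite the discretized return as an integral of a step function, truncate at a horizon $T$ using boundedness of $r$ and the geometric decay of $\gamma^t$, and control the compact piece $[0,T]$ via the Lipschitz assumption together with the trajectory-convergence bound $\|s_t - s_\deltat^{\lfloor t/\deltat\rfloor}\| \leq \deltat\,\frac{C}{K}e^{Kt}$ of Lemma~\ref{th:traj-conv}. The only cosmetic differences are that you handle the discount-factor mismatch by bounding $|\gamma^t-\gamma^{\lfloor t/\deltat\rfloor\deltat}|$ directly rather than factoring out $\frac{\deltat\log\gamma}{\gamma^{\deltat}-1}=1+\bigO(\deltat)$, and that you conclude with a qualitative two-step $\eps$ argument (fix $T$, then shrink $\deltat$) where the paper takes $T=-\tfrac{1}{K}\log\deltat$ to extract an explicit rate; both correctly yield the claimed $\smallo(1)$.
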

\begin{proof}

We use the notation $\tilde r(s) = r(s, \pi(s))$. Let $\tilde s_\deltat^t \deq s_\deltat^{\lfloor t/\deltat\rfloor}$. We have:
\begin{align}
  V^\pi_\deltat(s) = \int_t \gamma^t \tilde r(\tilde s_\deltat^t)dt + \bigO(\deltat)
\end{align}
Indeed:
\begin{align}
  V^\pi_\deltat(s) =& \sum_{k=0}^\infty \gamma^{k\deltat}\tilde r(s_\deltat^k)\deltat \\
  =& \sum_{k=0}^\infty \gamma^{k\deltat}\int_{u=k}^{k+1}\tilde r(\tilde s_\deltat^{u\deltat})du \\
  =& \sum_{k=0}^\infty \frac{\deltat \log \gamma}{\gamma^\deltat - 1}\int_{u=k}^{k+1}\gamma^{u\deltat}\tilde r(\tilde s_\deltat^{u\deltat}) du \\
  =& \frac{\deltat \log \gamma}{\gamma^\deltat - 1}  \int_{t=0}^{\infty}\gamma^{t}\tilde r(\tilde s_\deltat^{t})dt \\
\end{align}
But:
\begin{align}
  \frac{\deltat \log \gamma}{\gamma^\deltat - 1} &= \frac{\deltat \log \gamma}{\deltat \log\gamma + \bigO(\deltat^2)} \\
  &= 1 +  \bigO(\deltat) \\
\end{align}

Therefore:
\begin{align}
  V^\pi_\deltat(s) = \int_t \gamma^t \tilde r(\tilde s_\deltat^t)dt + \bigO(\deltat)
\end{align}

We now have, for any $T>0$,
\begin{align}
  |V^\pi_\deltat(s) - V^\pi(s)|  &= | \int_{t=0}^\infty\gamma^t \left(\tilde r(\tilde s_\deltat^t) - \tilde r(s_t) \right)dt | + \bigO(\deltat) \\
				 &= | \int_{t=0}^T\gamma^t \left(\tilde r(\tilde s_\deltat^t) - r(s_t) \right)dt | \\
				 &+ | \int_{t=T}^\infty\gamma^t \left(\tilde r(\tilde s_\deltat^t) - \tilde r(s_t) \right)dt | + \bigO(\deltat)
\end{align}

The second term can be bounded by the supremum of the reward:
\begin{align}
  | \int_{t=T}^\infty\gamma^t \left(\tilde r(\tilde s_\deltat^t) - \tilde r(\tilde s_t) \right)dt | \leq 2 \frac{\|r\|_\infty}{\log(\frac{1}{\gamma})} \gamma^T
  \label{eq:v-firstpart}
\end{align}

The first term can be bounded by using Lemma.~1:
\begin{align}
  | \int_{t=0}^T\gamma^t &\left(\tilde r(\tilde s_\deltat^t) - \tilde r(s_t) \right)dt | \leq \int_{t=0}^T\gamma^t L_r \| s_t - \tilde s_\deltat^t \| dt \\
  \leq& \int_{t=0}^T L_r \frac{C \deltat}{K} \exp((K + \log \gamma)t ) dt \\
  \leq&  \frac{L_r C}{K(K + \log \gamma)}\exp((K + \log \gamma)T ) \deltat
   \label{eq:v-secondpart}
\end{align}
Let us set $T \deq -\frac{1}{K}\log(\deltat)$. By plugging into Eq. (\ref{eq:v-firstpart}), we have:
\begin{align}
	| \int_{t=T}^\infty\gamma^t \left(\tilde r(\tilde s_\deltat^t) - \tilde r(s_t) \right)dt | = \bigO( \deltat^{-\log\gamma}) = \smallo(1).
\end{align}
By plugging $T$ into equation (\ref{eq:v-secondpart}), we have:
\begin{align}
	| \int_{t=0}^T\gamma^t \left(\tilde r(\tilde s_\deltat^t) - \tilde r(s_t) \right)dt | = \bigO( \deltat^{-\frac{\log\gamma}{K}}) = \smallo(1),
\end{align}
yielding our result.

\end{proof}

For the following proof, we further assume that both $V^\pi$ and
$V^\pi_\deltat$ are continuously differentiable, and that the gradient and
Hessian of $V^\pi_\deltat$ w.r.t. $s$ are uniformly bounded in both $s$ and $\deltat$.
We also assume convergence of $\partial_s V^\pi_\deltat(s)$ to $\partial_s V^\pi(s)$ for
all $s$.

\begin{theorem}
	Under the hypothesis above, there exists $A^\pi\from {\cal S} \rightarrow
	\bb{R}$ such that $A^\pi_\deltat$ converges pointwise to $A^\pi$ as
	$\deltat$ goes to $0$. Besides,
	\begin{equation}
		A^\pi(s, a) = r(s, a) + \partial_s V^\pi(s) F(s, a) + \log \gamma V^\pi(s).
	\end{equation}\end{theorem}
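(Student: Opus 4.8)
The plan is to start from the Bellman equation for the discretized action-value function,
\[
Q^\pi_\deltat(s,a) = r(s,a)\,\deltat + \gamma^\deltat\, V^\pi_\deltat(s'),
\]
where the expectation over $s'$ disappears because the dynamics are deterministic, and then to Taylor-expand every $\deltat$-dependent factor while carefully tracking the remainders. Since $A^\pi_\deltat(s,a) = (Q^\pi_\deltat(s,a) - V^\pi_\deltat(s))/\deltat$, I will collect terms, divide by $\deltat$, and send $\deltat\to 0$.

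First I would control the one-step state increment. As $s'$ is the value at time $\deltat$ of the solution of $ds_t/dt = F(s_t,a)$ started at $s$, and $F(\cdot,a)$ is $\mathcal{C}^1$, Taylor's formula gives $s' - s = F(s,a)\,\deltat + \bigO(\deltat^2)$. Next, using that $V^\pi_\deltat$ is $\mathcal{C}^1$ with gradient and Hessian uniformly bounded in $s$ and $\deltat$, a second-order expansion yields $V^\pi_\deltat(s') = V^\pi_\deltat(s) + \partial_s V^\pi_\deltat(s)\cdot(s'-s) + \bigO(\|s'-s\|^2)$. Substituting the increment gives $V^\pi_\deltat(s') = V^\pi_\deltat(s) + \partial_s V^\pi_\deltat(s)\cdot F(s,a)\,\deltat + \bigO(\deltat^2)$; here the uniform Hessian bound is exactly what makes the remainder genuinely $\bigO(\deltat^2)$.

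Then I would expand the discount factor as $\gamma^\deltat = 1 + \deltat\log\gamma + \bigO(\deltat^2)$ and substitute both expansions into the Bellman equation. Using that $V^\pi_\deltat$ is bounded (guaranteed by boundedness of $r$, cf. Thm.~\ref{th:conv-value}), all cross terms are $\bigO(\deltat^2)$, so that
\[
Q^\pi_\deltat(s,a) - V^\pi_\deltat(s) = \left(r(s,a) + \partial_s V^\pi_\deltat(s)\cdot F(s,a) + \log\gamma\, V^\pi_\deltat(s)\right)\deltat + \bigO(\deltat^2),
\]
and dividing by $\deltat$ gives
\[
A^\pi_\deltat(s,a) = r(s,a) + \partial_s V^\pi_\deltat(s)\cdot F(s,a) + \log\gamma\, V^\pi_\deltat(s) + \bigO(\deltat).
\]

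Finally I would pass to the limit termwise: by Thm.~\ref{th:conv-value}, $V^\pi_\deltat(s)\to V^\pi(s)$; by the standing assumption, $\partial_s V^\pi_\deltat(s)\to\partial_s V^\pi(s)$; and $r(s,a)$, $F(s,a)$ are fixed. Hence the right-hand side converges, which simultaneously establishes existence of the pointwise limit $A^\pi(s,a)$ and identifies it as $r(s,a) + \partial_s V^\pi(s)\cdot F(s,a) + \log\gamma\, V^\pi(s)$. There is no deep obstacle here; the only point requiring genuine care is the bookkeeping of error terms. The uniform-in-$\deltat$ bound on the Hessian of $V^\pi_\deltat$ is essential so that the $\bigO(\deltat^2)$ remainders do not secretly blow up as $\deltat\to 0$, and the two separate convergence hypotheses (for $V^\pi_\deltat$ and for its gradient) are precisely what let the limit pass through term by term.
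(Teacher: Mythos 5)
Your proposal is correct and follows essentially the same route as the paper's proof: expand the one-step Bellman equation using $s' = s + F(s,a)\,\deltat + \bigO(\deltat^2)$, a second-order Taylor expansion of $V^\pi_\deltat$ justified by the uniform Hessian bound, and $\gamma^\deltat = 1 + \deltat\log\gamma + \bigO(\deltat^2)$, then divide by $\deltat$ and pass to the limit using the convergence of $V^\pi_\deltat$ and $\partial_s V^\pi_\deltat$. The bookkeeping and the role of each hypothesis match the paper's argument.
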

\begin{proof}
Denote $\tilde{s}_\deltat^t(s_0)$ the
evaluation at instant $t$ of the solution of $d\tilde{s}_t/dt = F(\tilde{s}_t,
\pi(s_0))$ with starting point $s_0$.

The Bellman equation on $Q^\pi_\deltat$ yields
\begin{align}
	Q^\pi_\deltat(s, a) &= r(s, a) \deltat + \gamma^\deltat V^\pi_\deltat(\tilde{s}_\deltat^\deltat(s)).
\end{align}

For all $s$, a first-order Taylor expansion yields
\begin{equation}
	\tilde{s}_\deltat^\deltat(s) = s + F(s, a)\deltat + \bigO(\deltat^2)
\end{equation}
where the constant in $\bigO()$ is uniformly bounded thanks to the
assumptions on the Hessian.
Thus, by uniform boundedness of the Hessian of $V^\pi_\deltat$,\footnote{
	Without boundedness of the Hessian, we cannot write the second order Taylor expansion of
	$V^\pi_\deltat(\tilde{s}_\deltat^\deltat(s))$ in term of $\deltat$.
}
\begin{align}
  &Q^\pi_\deltat(s, a) = r(s, a) \deltat + \\
  &\left(1 + \ln(\gamma) \deltat + \bigO(\deltat^2)\right) \left(V^\pi_\deltat(s) + \deltat \partial_s V^\pi_\deltat(s) F(s, a) + \bigO(\deltat^2)\right).\nonumber
\end{align}
Now, this yields
\begin{equation}
	A^\pi_\deltat(s, a) = r(s, a) + \ln(\gamma) V^\pi_\deltat(s) + \partial_s V^\pi_\deltat(s) F(s, a) + \bigO(\deltat),
\end{equation}
and using the convergence of $V^\pi_\deltat(s)$ to $V^\pi(s)$ (Thm.~\ref{th:conv-value}) and $\partial_s V^\pi_\deltat(s)$ to $\partial_s V^\pi(s)$ (hypothesis) yields the result with 
\begin{equation}
	A^\pi(s, a) = r(s, a) + \ln(\gamma) V^\pi(s) + \partial_s V^\pi(s) F(s, a).
	\label{eq:adv_function}
\end{equation}

\end{proof}

We now show that policy improvement works with the continous time advantage function, i.e.\
\begin{theorem}
\label{thm:policyimprovement}
	Let $\pi$ and $\pi'$ be two policies such that both $s \rightarrow r(s, \pi(s))$ and
	$s \rightarrow r(s, \pi'(s))$ are continuous.
	Assume that both $V^\pi$ and $V^{\pi'}$ are continuously differentiable.
	Define the advantage function for policies $\pi$ and $\pi'$ as in Eq.~\eqref{eq:adv_function}.

	If for all $s$, $A^\pi(s, \pi'(s)) \geq 0$, then for all $s$,
	$V^\pi(s) \leq V^{\pi'}(s)$. Moreover, if for all $s$, 
	$V^{\pi'}(s)>V^{\pi}(s)$, then there exists $s'$ such that $A^\pi(s',\pi'(s'))>0$.
\end{theorem}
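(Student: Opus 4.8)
The plan is to establish a single exact identity from which both claims follow immediately: for any state $s$, letting $(s_t)_{t\geq 0}$ denote the trajectory generated by following $\pi'$ from $s_0 = s$ (i.e.\ $ds_t/dt = F(s_t,\pi'(s_t))$), I claim that
\begin{equation}
	V^{\pi'}(s) - V^\pi(s) = \int_0^\infty \gamma^t\, A^\pi(s_t, \pi'(s_t))\, dt.
	\label{eq:pi-improvement-key}
\end{equation}
This is the continuous-time analogue of the classical performance-difference lemma: the gain from switching to $\pi'$ is exactly the discounted integral of the advantage of $\pi'$ measured against $\pi$ along the $\pi'$-trajectory.

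First I would derive \eqref{eq:pi-improvement-key} by differentiating the map $t \mapsto \gamma^t V^\pi(s_t)$ along the $\pi'$-trajectory. Since $V^\pi$ is $\mathcal{C}^1$ and $s_t$ solves the ODE for $\pi'$, the chain rule gives
\begin{equation}
	\frac{d}{dt}\!\left[\gamma^t V^\pi(s_t)\right] = \gamma^t\!\left[\log(\gamma)\,V^\pi(s_t) + \partial_s V^\pi(s_t)\,F(s_t,\pi'(s_t))\right].
\end{equation}
Comparing the bracket with the definition \eqref{eq:adv_function} of $A^\pi$ evaluated at $(s_t,\pi'(s_t))$, the bracket equals $A^\pi(s_t,\pi'(s_t)) - r(s_t,\pi'(s_t))$. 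Integrating from $0$ to $T$, letting $T\to\infty$ so that $\gamma^T V^\pi(s_T)\to 0$ (here I use boundedness of $V^\pi$, which holds since $r$ is bounded, together with $\gamma<1$), and recognizing $\int_0^\infty \gamma^t r(s_t,\pi'(s_t))\,dt = V^{\pi'}(s)$ as the return \eqref{eq:continuous-return} of the $\pi'$-trajectory from $s$, one rearranges to obtain exactly \eqref{eq:pi-improvement-key}.

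With \eqref{eq:pi-improvement-key} in hand, both conclusions are immediate. If $A^\pi(s,\pi'(s))\geq 0$ for all $s$, then in particular the integrand $\gamma^t A^\pi(s_t,\pi'(s_t))$ is nonnegative along every trajectory, so the right-hand side is $\geq 0$ and $V^{\pi'}(s)\geq V^\pi(s)$ for all $s$. For the second claim, pick any $s$ with $V^{\pi'}(s) > V^\pi(s)$; then the right-hand side of \eqref{eq:pi-improvement-key} is strictly positive, which forces $A^\pi(s_{t^*},\pi'(s_{t^*}))>0$ for some $t^*$, since otherwise the integrand would be everywhere $\leq 0$ and the integral nonpositive; this gives the witness $s' = s_{t^*}$.

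The main obstacle is the behaviour at infinity: the derivation hinges on $\gamma^T V^\pi(s_T)\to 0$ and on the convergence of the two improper integrals appearing in the rearrangement, both of which must be justified from a uniform bound on $V^\pi$ (hence on $r$). Convergence of $\int_0^\infty \gamma^t A^\pi(s_t,\pi'(s_t))\,dt$ itself then follows because $\gamma^t A^\pi(s_t,\pi'(s_t)) = \frac{d}{dt}[\gamma^t V^\pi(s_t)] + \gamma^t r(s_t,\pi'(s_t))$ is the sum of a total derivative with bounded endpoints and an absolutely integrable term. Everything else is a routine application of the chain rule and the definition \eqref{eq:adv_function}; the one genuinely reinforcement-learning-specific ingredient is recognizing the last integral as $V^{\pi'}(s)$, which is what turns a differential identity into a policy-improvement statement.
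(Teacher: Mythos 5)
Your proof is correct and is essentially the paper's own argument in a different packaging: the identity $V^{\pi'}(s)-V^\pi(s)=\int_0^\infty \gamma^t A^\pi(s_t,\pi'(s_t))\,dt$ is exactly what one gets by integrating the derivative of the paper's auxiliary function $B(T)=\int_0^T\gamma^t r(s_t,\pi'(s_t))\,dt+\gamma^T V^\pi(s_T)$, whose derivative the paper computes to be $\gamma^T A^\pi(s_T,\pi'(s_T))$. Both conclusions are then drawn by the same sign argument, so there is no substantive difference.
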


\begin{proof}
Let $(s_t)_{t\geq 0}$ be a trajectory sampled from $\pi'$ i.e.\ solution of the equation
\begin{equation}
	ds_t / dt = F(s_t, \pi'(s_t))
\end{equation}
with initial condition $s_0 = s$.

Define
\begin{equation}
	B(T) = \int_{t=0}^T \gamma^t r(s_t, \pi'(s_t)) dt + \gamma^T V^\pi(s_T).
\end{equation}
This function if continuously differentiable, and its derivative is
\begin{align}
	\dot{B}(T) &= \gamma^T r(s_T, \pi'(s_T)) \\ &+ \gamma^T \partial_s V^\pi(s) F(s, \pi'(s)) + \gamma^T \ln(\gamma) V^\pi(s_T)\\
		   &= \gamma^T A^\pi(s_T, \pi'(s_T))\\
		   &\geq 0.
\end{align}
Thus $B$ is increasing, and $B(0) = V^\pi(s)$, $\lim\limits_{T\rightarrow \infty} B(t) = V^{\pi'}(s)$.
Consequently, $V^\pi(s) \leq V^{\pi'}(s)$.
Furthermore, if $V^\pi(s) < V^{\pi'}(s)$, then there exists $T_0$ such that $\dot B(T_0) > 0$ (otherwise $B$ is constant), and $A^\pi(s_{T_0}, \pi'(s_{T_0})) > 0$.

\end{proof}
\begin{theorem}
	Let $\cal A = \bb R^A$ be the action space, and let ${\cal P}_1 = \bb
	R^{p_1}$ and ${\cal P}_2 = \bb R^{p_2}$ be parameter spaces.  Let
	$A\colon {\cal P}_1 \times {\cal S} \times {\cal A} \rightarrow \bb R$
	and $V\colon {\cal P}_2 \times {\cal S} \rightarrow \bb R$ be ${\cal
	C}^2$ function approximators with bounded gradients and Hessians. Let
	$(a_t)_{t\geq 0}$ be a ${\cal C}^1$ exploratory action trajectory and
	$(s_t)_{t\geq 0}$ the resulting state trajectory, when starting from $s_0$ and
	following $ds_t/dt=F(s_t,a_t)$.  Let $\theta^k_\deltat$ and
	$\psi^k_\deltat$ be the discrete parameter trajectories resulting from
	the gradient descent steps in the main text, with
	learning rates $\eta^V = \alpha^V \deltat^\beta$ and $\eta^A =
	\alpha^A \deltat^\beta$ for some $\beta\geq 0$. Then,
	\begin{itemize}
		\item If $\beta = 1$ the discrete parameter trajectories converge to continuous parameter
			trajectories as $\deltat$ goes to $0$.
		\item If $\beta > 1$, parameter trajectories become
		stationary as
			$\deltat$ goes to $0$.
		\item If $\beta < 1$, parameters can grow arbitrarily large after an arbitrarily small physical time when $\deltat$ goes to $0$.
	\end{itemize}
\end{theorem}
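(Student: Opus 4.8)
The plan is to show that the discrete parameter updates are a perturbed Euler discretization of an ordinary differential equation in $\Theta \deq (\theta,\psi)$, and then to read off the three regimes from the mismatch between the learning-rate scaling $\deltat^\beta$ and the $\bigO(1/\deltat)$ updates performed per unit of physical time. First I would Taylor-expand the temporal-difference increment. Using the one-step transition $s' = s + F(s,a)\deltat + \bigO(\deltat^2)$ together with $\gamma^\deltat = 1 + \deltat\log\gamma + \bigO(\deltat^2)$ and the uniform bound on the Hessian of $V_\theta$, a second-order expansion of $V_\theta(s')$ gives
\begin{equation}
\frac{\delta Q_\deltat}{\deltat} = g(s,a,\theta,\psi) + \bigO(\deltat),
\end{equation}
where $g(s,a,\theta,\psi)\deq A_\psi(s,a)-r(s,a)-\log\gamma\,V_\theta(s)-\partial_s V_\theta(s)\,F(s,a)$ is the continuous-time Bellman residual (which vanishes at the true $(V^\pi,A^\pi)$), the $\bigO(\deltat)$ being uniform thanks to the global bounds on the gradients and Hessians. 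Substituting into the update rules, the iteration along $(s_t,a_t)$ reads
\begin{equation}
\Theta^{k+1}_\deltat = \Theta^k_\deltat + \deltat^\beta\,\Phi(\Theta^k_\deltat,k\deltat) + \bigO(\deltat^{\beta+1}),
\end{equation}
with $\Phi(\Theta,t) = \bigl(\alpha^V\,\partial_\theta V_\theta(s_t)\,g,\ \alpha^A\,\partial_\psi A_\psi(s_t,a_t)\,g\bigr)$ and $g = g(s_t,a_t,\theta,\psi)$.

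For $\beta = 1$ this is exactly a one-step Euler scheme for $\dot\Theta_t = \Phi(\Theta_t,t)$, perturbed by an $\bigO(\deltat^2)$ term per step. I would first establish well-posedness of this ODE: the bounded-gradient and bounded-Hessian assumptions make $\partial_\theta V$ and $\partial_\psi A$ bounded and globally Lipschitz in $\Theta$, while $g$ is Lipschitz in $\Theta$ with at most linear growth, so $\Phi$ is locally Lipschitz in $\Theta$ with at most linear growth, and Lipschitz in $t$ on compact intervals (since $t\mapsto(s_t,a_t,r(s_t,a_t))$ is $\mathcal{C}^1$ along the trajectory). Picard--Lindel\"of together with a Gr\"onwall a priori bound coming from the linear growth yields a unique solution on every $[0,T]$, confined to a compact set on which $\Phi$ is genuinely Lipschitz. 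Convergence of the discrete trajectory then follows by the same Gr\"onwall argument as in Lemma~\ref{th:traj-conv}: the per-step local error is $\bigO(\deltat^2)$ (Euler truncation plus the TD perturbation), and summing over the $\bigO(1/\deltat)$ steps with the $(1+L\deltat)$ amplification gives a global error $\bigO(\deltat)\to 0$.

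For $\beta > 1$ I would bound the total displacement directly. Setting $u_k = \|\Theta^k_\deltat - \Theta^0\|$ and using $\|\Phi(\Theta,t)\|\leq C(1+\|\Theta\|)$, each step gives $u_{k+1}\leq u_k(1+C\deltat^\beta)+C'\deltat^\beta$, so on a fixed physical horizon $T$ (that is $k\leq N=\lfloor T/\deltat\rfloor$),
\begin{equation}
u_N \leq \frac{C'}{C}\left(e^{C T \deltat^{\beta-1}}-1\right)\xrightarrow[\deltat\to 0]{} 0,
\end{equation}
since $\deltat^{\beta-1}\to 0$; the parameters thus stay at their initial values in the limit. For $\beta < 1$ the same bookkeeping produces an exponent $CT\deltat^{\beta-1}\to\infty$, which signals but does not prove blow-up, so I would exhibit one diverging instance. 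Take $V_\theta(s)=\theta$ (so $\partial_s V\equiv 0$), $A\equiv 0$, and a constant reward $r\neq 0$; the update collapses to the scalar recurrence $\theta^{k+1}=\theta^k(1+\alpha^V c_\deltat\deltat^\beta)-\alpha^V r\deltat^\beta$ with $c_\deltat=(1-\gamma^\deltat)/\deltat\to -\log\gamma > 0$, whose fixed point $r/c_\deltat$ is \emph{unstable}. Solving explicitly, for $\theta^0\neq r/c_\deltat$ the deviation grows by $(1+\alpha^V c_\deltat\deltat^\beta)^{\lfloor T_0/\deltat\rfloor}\approx \exp\bigl(\alpha^V(-\log\gamma)T_0\,\deltat^{\beta-1}\bigr)$, which tends to $\infty$ for every fixed physical time $T_0 > 0$; hence $|\theta^{\lfloor T_0/\deltat\rfloor}|\to\infty$, as claimed. (That bootstrapped semi-gradient updates can be genuinely unstable is the familiar deadly-triad phenomenon; the small-$\deltat$ time acceleration merely amplifies it.)

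I expect the $\beta = 1$ case to be the main obstacle: the other two reduce to a one-line Gr\"onwall estimate and an explicit scalar example, whereas convergence here requires first proving that the limiting ODE is well-posed from the approximator regularity alone, handling the coupling of $\theta$ and $\psi$ and the fact that $\Phi$ is only locally Lipschitz because $g$ grows linearly in $\Theta$. The delicate point is confining the continuous solution to a compact set on $[0,T]$ so that a finite Lipschitz constant is available before the Euler error analysis can be invoked.
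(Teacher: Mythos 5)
Your proposal is correct and, for the cases $\beta=1$ and $\beta>1$, follows essentially the same route as the paper: rewrite the update as a perturbed Euler step $\Theta^{k+1}_\deltat=\Theta^k_\deltat+\deltat^\beta\,\Phi(\Theta^k_\deltat,k\deltat)+\bigO(\deltat^{\beta+1})$ for the vector field built from the continuous-time Bellman residual, then invoke the Gr\"onwall/Euler-scheme argument of Lemma~\ref{th:traj-conv}. (The paper treats $\beta>1$ by the same convergence argument with limiting ODE $\dot\Theta=0$, whereas you bound the total displacement directly; the two are equivalent.) You are in fact slightly more careful than the paper on one point: the paper asserts that the field $F$ is Lipschitz from the bounded-gradient and bounded-Hessian hypotheses, but the residual grows linearly in $(\theta,\psi)$, so the field is only locally Lipschitz, and your a~priori confinement of the solution to a compact set on $[0,T]$ is the right way to close that gap. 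The only genuinely different ingredient is the $\beta<1$ counterexample. The paper takes a zero-reward environment with $s_t=\sin(t)$ and $V_\theta(s)=\theta s$, and tracks $\log\theta$ over $\lfloor\deltat^{-(\beta+3)/4}\rfloor$ steps, so that blow-up occurs within a physical time window that itself shrinks to $0$. You instead take a constant reward and the state-independent approximator $V_\theta(s)=\theta$, whose semi-gradient TD recursion has an unstable fixed point at $r/c_\deltat$ and deviation growth $\exp\bigl(\alpha^V(-\log\gamma)T_0\,\deltat^{\beta-1}\bigr)\to\infty$ at every fixed $T_0>0$; since $T_0$ is arbitrary, this also establishes the "arbitrarily small physical time" claim. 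Both instances are valid for the existential statement; yours is arithmetically simpler, while the paper's makes the additional rhetorical point that divergence occurs even when the approximator can represent the true value function exactly.
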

\begin{proof}
Let $(s_t, a_t)_{t\geq 0}$ be the trajectory on which parameters are
learnt. To simplify notations, define
\begin{equation}
	A_\psi(s, a) = \bar{A}_\psi(s, a) - \bar{A}_\psi(s, \pi(s)).
\end{equation}
Define $F$ as
\begin{align}
	F^\theta(\theta, \psi, s, a) &= \alpha^V (r(s, a) + \ln(\gamma) V_\theta(s) \\ \nonumber &+ \partial_s V_\theta(s) F(s, a) - A_\psi(s, a))\partial_\theta V_\theta(s)\\
	F^\psi(\theta, \psi, s, a) &= \alpha^A (r(s, a) + \ln(\gamma) V_\theta(s) \\ \nonumber &+ \partial_s V_\theta(s) F(s, a) - A_\psi(s, a))\partial_\psi A_\psi(s, a).
\end{align}
From the bounded Hessians and Gradients hypothesis, $V$, $A$, $\partial_s V$,
$\partial_\theta V$ and $\partial_\psi A$ are uniformly Lipschitz continuous in
$\theta$ and $\psi$, thus $F$ is Lipschitz continuous.

The discrete equations for parameters updates with learning rates $\alpha^V \deltat^\beta$ and
$\alpha^A \deltat^\beta$ are
\begin{align}
	\delta Q &= r(s_{k \deltat}, a_{k \deltat}) \deltat + 
		\gamma^\deltat V_{\theta^{k}_\deltat}(s_{(k + 1)\deltat}) \\ \nonumber &- 
		V_{\theta^k_\deltat}(s_{k\deltat}) - A_\psi(s_{k \deltat}, a_{k \deltat})\\
	\theta^{k+1}_\deltat &= \theta^k_\deltat + 
	\alpha^V \deltat^\beta \frac{\delta Q
	}{\deltat}\partial_\theta V_{\theta^k_\deltat}(s_{k\deltat})\\
	\psi^{k+1}_\deltat &= \psi^k_\deltat + 
	\alpha^A \deltat^\beta \frac{\delta Q
	}{\deltat}
	\partial_\psi A_{\theta^k_\deltat}(s_{k\deltat}, a_{k\deltat})
\end{align}
Under uniform boundedness of the Hessian of $s \mapsto V_\theta(s)$,
one can show
\begin{equation}
	\begin{pmatrix}
		\theta_\deltat^{k+1} \\
		\psi_\deltat^{k+1}
	\end{pmatrix} =
	\begin{pmatrix}
		\theta_\deltat^{k} \\
		\psi_\deltat^{k}
	\end{pmatrix} + \deltat^\beta F(\theta_\deltat^k, \psi_\deltat^k, s_{k \deltat}, a_{k\deltat}) + \bigO(\deltat^\beta \deltat),
\end{equation}
with a $\bigO$ independent of $k$. With the additional hypothesis that the gradient of
$(s, a) \rightarrow \bar{A}_\psi(s, a)$ is uniformly bounded, we have
\begin{itemize}
	\item For $\beta = 1$, a proof scheme identical to that of Thm.~\ref{th:traj-conv} shows that
discrete trajectories converge pointwise to continuous trajectories defined by the differential equation
\begin{equation}
	\frac{d}{dt}\begin{pmatrix}
		\theta_t \\
		\psi_t
	\end{pmatrix} =
	F(\theta_t, \psi_t, s_t, a_t),
\end{equation}
which admits unique solutions for all initial parameters, since $F$ is uniformly lipschitz continuous.
\item Similarly, for $\beta > 1$, the proof scheme of Thm.~\ref{th:traj-conv} shows that
discrete trajectories converge pointwise to continuous trajectories defined by the differential equation
\begin{equation}
	\frac{d}{dt}\begin{pmatrix}
		\theta_t \\
		\psi_t
	\end{pmatrix} = 0
\end{equation}
and thus that trajectories shrink to a single point as $\deltat$ goes to $0$.
\end{itemize}

We now turn to proving that when $\beta < 1$,
trajectories can diverge instantly in physical time. 
Consider the following continuous MDP, 
\begin{equation}
	s_t = \sin(t)
\end{equation}
whatever the actions,
with reward $0$ everywhere and $0 < \gamma < 1$.
The resulting value function is $V(s) = 0$ (since there
are no actions, $V$ is independent of a policy), and the advantage
function is $0$.
We consider the function approximator $V_\theta(s) = \theta s$ (which can
represent the true value function).
The update rule for $\theta$ is
\begin{align}
\delta Q_\deltat^k &= \gamma^\deltat \theta_\deltat^k \sin((k + 1)
\deltat) - \theta_\deltat^k \sin(k\deltat)\\
	\theta_\deltat^{k+1} &= \theta_\deltat^k + \alpha \deltat^\beta
	\frac{\gamma^\deltat \theta_\deltat^k \sin((k + 1) \deltat) -
	\theta_\deltat^k \sin(k\deltat)}{\deltat}\,\sin(k\deltat)
\end{align}
Set $K_\deltat \deq \lfloor \deltat^{-\frac{\beta + 3}{4}}\rfloor$, then for all
$k \leq K_\deltat$, $\smallo(k\deltat) = \smallo(1)$ and
\begin{align}
	\theta_\deltat^{k+1} &= \theta_\deltat^k(1 + \alpha \deltat^\beta
	(1 + \smallo(1))\sin(k\deltat))\\
\end{align}
Let $\rho_\deltat^k \deq \log \theta_\deltat^k$. Then
\begin{align}
	\rho_\deltat^k = \rho_\deltat^k + \alpha
	k\deltat^{\beta + 1} + \smallo(k\deltat^{\beta + 1}).
\end{align}
Finally,
\begin{align}
	\rho_\deltat^{K_\deltat} &= \rho_\deltat^0 + \alpha \frac{K_\deltat (K_\deltat + 1)}{2}\deltat^\beta + \smallo(K_\deltat^2\deltat^{\beta + 1})\\
				 &= \rho_\delta^0 + \alpha\deltat^\frac{\beta
					 - 1}{3} + \smallo(\deltat^\frac{\beta - 1}{3})\\
				 &\xrightarrow[\deltat\to 0]{} +\infty.
\end{align}
Thus parameters diverge in an infinitesimal physical time when $\deltat$ goes to $0$.

\end{proof}

\begin{theorem}
  Let $F\from {\cal S} \times {\cal A} \rightarrow \bb{R}^n$ be the dynamic, and $\pi\from {\cal S} \times {\cal A} \rightarrow [0,1]$ be the policy, such that $\pi(s, \cdot)$ is a probability distribution over ${\cal A}$. Assume that 
  $F$ is $C^1$ with bounded derivatives, and that $\pi$ is $C^1$ and bounded.
  For any $\deltat > 0$, define the discretized trajectory
	$(s_\deltat^k)_k$ which amounts to sample an action from $\pi(s, \cdot)$ and maintaining each action for a
	time interval $\deltat$; it is defined by induction as $s_\deltat^0 = s_0$,
	$s_\deltat^{k + 1}$ is the value at time $\deltat$ of
	the unique solution of
	\begin{equation}
		\frac{d\tilde{s}_t}{dt} = F(\tilde{s}_t, a_k)
		\label{eq:diff_2}
	\end{equation}
	with $a_k \sim \pi(s_\deltat^k, \cdot)$ and initial point $s_\deltat^k$.

        Then the
        agent's trajectories converge when $\deltat\to 0$ to the solutions of the
        \emph{deterministic} equation:
        \begin{equation}
		\frac{ds_t}{dt} = \E_{a\sim \pi(s_t, \cdot)}F(s_t, a).
		\label{eq:diff}
              \end{equation}
              Notably, if $\pi$ is an epsilon greedy strategy that mixes a deterministic exploitation policy $\pi^{\text{deterministic}}$ with an action taken from a noise policy $\pi^{\text{noise}}$ with probability $\varepsilon$ at, the trajectory converge to the solutions of the equation: 
        \begin{equation}
          d s_t/dt= (1-\eps) F(s_t,\pi^\text{deterministic}(s_t))+\eps \E_{a\sim
            \pi^\text{noise}(a|s)}F(s_t,a)
	\end{equation}
\end{theorem}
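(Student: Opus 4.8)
The plan is to read this as a stochastic-averaging (law-of-large-numbers) statement and reduce it to the deterministic Euler convergence already proved in Lemma~\ref{th:traj-conv}. Introduce the averaged drift $\bar F(s)\deq \E_{a\sim\pi(s,\cdot)}F(s,a)$, which is exactly the right-hand side of the target ODE. Under the stated hypotheses ($F$ of class $C^1$ with bounded derivatives, $\pi$ of class $C^1$ and bounded) one first checks that $\bar F$ is bounded ($\|\bar F\|_\infty\le\|F\|_\infty$ since $\pi(s,\cdot)$ is a probability measure) and Lipschitz, so the limiting equation has a unique solution $(s_t)$ and the plain Euler scheme $\hat s_\deltat^{k+1}=\hat s_\deltat^k+\bar F(\hat s_\deltat^k)\,\deltat$ converges pointwise to $(s_t)$ by the same argument as in Lemma~\ref{th:traj-conv} applied to the field $\bar F$.

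Next I would expand one discretization step. Because the frozen-action flow $\dot{\tilde s}=F(\tilde s,a_k)$ has uniformly bounded second time derivative, a second-order Taylor expansion gives
\[ s_\deltat^{k+1}=s_\deltat^k+F(s_\deltat^k,a_k)\,\deltat+R_k,\qquad \|R_k\|\le C\deltat^2 \]
uniformly in $k$. Let $\mathcal F_k$ be the $\sigma$-algebra generated by $a_0,\dots,a_{k-1}$ (which determines $s_\deltat^k$), and set $\xi_k\deq F(s_\deltat^k,a_k)-\bar F(s_\deltat^k)$. By definition of $\bar F$ and of the sampling $a_k\sim\pi(s_\deltat^k,\cdot)$, one has $\E[\xi_k\mid\mathcal F_k]=0$ and $\|\xi_k\|\le 2\|F\|_\infty$, so the partial sums $\sum_k\xi_k\deltat$ form an $L^2$ martingale with mean-zero increments of size $\bigO(\deltat)$. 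Rewriting the step as $s_\deltat^{k+1}=s_\deltat^k+\bar F(s_\deltat^k)\,\deltat+\xi_k\,\deltat+R_k$ exhibits the stochastic trajectory as the deterministic Euler scheme for $\bar F$ perturbed by this martingale noise.

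The core estimate compares $s_\deltat^k$ with the noise-free iterate $\hat s_\deltat^k$ through $d_k\deq s_\deltat^k-\hat s_\deltat^k$. The decisive point is that one must \emph{not} bound the noise increment by increment: $\sum_{k\le t/\deltat}\|\xi_k\|\,\deltat=\bigO(t)$ does not vanish, so the martingale cancellation has to be retained. I would therefore work in $L^2$. Writing $d_{k+1}=d_k+g_k+\xi_k\,\deltat$ with $g_k$ the $\mathcal F_k$-measurable part (the Lipschitz drift difference plus $R_k$), conditional orthogonality of the martingale increment gives
\[ \E\bigl[\|d_{k+1}\|^2\mid\mathcal F_k\bigr]=\|d_k+g_k\|^2+\E\bigl[\|\xi_k\|^2\mid\mathcal F_k\bigr]\deltat^2\le \|d_k+g_k\|^2+4\|F\|_\infty^2\,\deltat^2. \]
Using Lipschitzness of $\bar F$ together with a standard Young inequality to absorb the cross term yields, for $u_k\deq\E\|d_k\|^2$, the linear recursion $u_{k+1}\le(1+c\deltat)u_k+C''\deltat^2$. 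Since $u_0=0$, discrete Grönwall gives $u_{\lfloor t/\deltat\rfloor}\le\frac{C''}{c}e^{ct}\,\deltat\to0$, i.e.\ $d_{\lfloor t/\deltat\rfloor}\to0$ in $L^2$. Here the per-step variance $\bigO(\deltat^2)$ combines with the $\bigO(1/\deltat)$ steps to leave only a residual $\bigO(\deltat)$, which is precisely the averaging effect announced informally in the main text.

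Combining $d_{\lfloor t/\deltat\rfloor}\to0$ in $L^2$ (hence in probability) with $\hat s_\deltat^{\lfloor t/\deltat\rfloor}\to s_t$ from the deterministic step gives convergence of the stochastic trajectory to the solution of the limiting ODE. The $\varepsilon$-greedy corollary is then immediate: for $\pi(s,\cdot)=(1-\varepsilon)\,\delta_{\pi^{\text{deterministic}}(s)}+\varepsilon\,\pi^{\text{noise}}(\cdot\mid s)$ the averaged drift is $\bar F(s)=(1-\varepsilon)F(s,\pi^{\text{deterministic}}(s))+\varepsilon\,\E_{a\sim\pi^{\text{noise}}}F(s,a)$, which is exactly the stated equation. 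I expect the main obstacle to be precisely this martingale step: resisting the temptation to triangle-inequality the noise away and instead tracking second moments so that the mean-zero structure produces the $\sqrt{\deltat}$-type gain. A secondary technical point is verifying that $\bar F$ remains Lipschitz when $\pi$ places a Dirac mass on a deterministic action (as in the $\varepsilon$-greedy case), which requires Lipschitzness of $\pi^{\text{deterministic}}$ rather than a $C^1$ density.
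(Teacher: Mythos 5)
Your proof is correct, but it takes a genuinely different route from the paper's. The paper uses a two-scale argument: it writes the micro-step size as $\deltat^2$, groups $\lfloor 1/\deltat\rfloor$ micro-steps into macro-blocks of physical length $\deltat$, shows that the within-block sum of $F(s,a_k)\deltat^2$ concentrates around $f(s)\deltat$ with a zero-mean block error $\xi$ satisfying $\E\|\xi\|^2\le\sigma^2\deltat^3$, and then compares the block-sampled process to the deterministic Euler scheme for $f$ via a discrete Gr\"onwall bound on $\sum_j(1+L_f\deltat)^j\|\xi_j\|$, finishing with Markov's inequality and $\E\|\xi\|\le\sigma\deltat^{3/2}$ to get convergence in probability at rate $\bigO(\sqrt{\deltat})$. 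Crucially, the paper only exploits martingale cancellation \emph{within} each block and then triangle-inequalities the block errors \emph{across} blocks. You instead work at a single scale and retain the martingale structure across all steps: conditional orthogonality of $\xi_k\,\deltat$ kills the cross term in $\E[\|d_{k+1}\|^2\mid\mathcal{F}_k]$, so each step contributes only $\bigO(\deltat^2)$ variance, and the discrete Gr\"onwall on second moments gives $\E\|d_{\lfloor t/\deltat\rfloor}\|^2=\bigO(\deltat)$. This is cleaner (no artificial reparameterization of the step size), gives $L^2$ rather than just in-probability convergence, and yields a better deviation bound ($\bigO(\deltat)/\eps^2$ by Chebyshev versus the paper's $\bigO(\sqrt{\deltat})/\eps$). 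Both arguments share the same implicit assumptions (boundedness of $F$ itself, not just its derivatives, and Lipschitzness of the averaged drift $\bar F$), and your closing remark about the $\eps$-greedy case is apt: the mixture policy places a Dirac mass on $\pi^{\text{deterministic}}(s)$ and so is not a $C^1$ density as literally required; the paper glosses over this, and regularity of $\bar F$ there indeed has to come from regularity of $s\mapsto F(s,\pi^{\text{deterministic}}(s))$ instead.
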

\begin{proof}

Consider $(s_{\deltat^2})$ the random trajectory of a near-continuous MDP with time-discretization $\deltat^2$ obtained by taking at each step $k$ an action $a_k$ along $a_k \sim \pi(a| s_{\deltat^2}^k)$ independantly. We have:

\begin{align}
  s_{\deltat^2}^{\lfloor 1/\deltat\rfloor} &= s_{\deltat^2}^0 + \sum_{k=1}^{\lfloor 1/\deltat\rfloor} s_{\deltat^2}^{k} -  s_{\deltat^2}^{k-1} + \bigO(\deltat^2) \\
                    &= s_{\deltat^2}^0 + \sum_{k=1}^{\lfloor 1/\deltat \rfloor} F(s_{\deltat^2}^{k-1}, a_{k-1})\deltat^2 + \bigO(\deltat^2)
\end{align}

We define $f(s) \deq \E_{a \sim \pi(s)}\left[F(s, a)\right] = \int_{a\in{\cal A}}F(s, a)\pi(s, a)$. Since $\pi$ and $F$ are bounded and $C^1$, we know that $f$ is $C^1$. We have: 
  \begin{align}
    s_{\deltat^2}^{\lfloor 1/\deltat\rfloor}   & = s_{\deltat^2}^0 + \sum_{k=1}^{\lfloor 1/\deltat\rfloor}
                        f(s_{\deltat^2}^{k-1})\deltat^2 \\ \nonumber &+ \sum_{k=1}^{\lfloor 1/\deltat\rfloor} (F(s_{\deltat^2}^{k-1}, a_{k-1}) - f(s_{\deltat^2}^{k-1}))\deltat^2 + \bigO(\deltat^2) \\
     s_{\deltat^2}^{\lfloor 1/\deltat\rfloor}  &= s_{\deltat^2}^0 + \sum_{k=1}^{\lfloor 1/\deltat\rfloor}
                        f(s_{\deltat^2}^{k-1})\deltat^2 + \xi + \bigO(\deltat^2)
  \end{align}
  with $\xi \deq  \deltat^2 \sum_{k=1}^{\lfloor 1/\deltat\rfloor} \left(F(s_{\deltat^2}^{k-1}, a_{k-1}) - f(s_{\deltat^2}^{k-1})\right)$. By definition, we have $\E[\xi] = 0$. Moreover, by using the independance of actions and the boundness of F, there is $\sigma >0$ such that:
  \begin{align}
    \E[\|\xi\|^2] \leq \sigma^2\deltat^3
  \end{align}

  We know that $f$ is $C^1$ on a compact space. Therefore, there is $L_f$ such that $f$ is $L_f$ Lipschitz, and we have:
  \begin{align}
    \|\left(\sum_{k=1}^{\lfloor 1/\deltat\rfloor} f(s_{\deltat^2}^{k-1})\deltat\right) - f(s_{\deltat^2}^0) \| \leq \deltat L_f\sum_{k=1}^{\lfloor 1/\deltat\rfloor} \|s_{\deltat^2}^{k-1} - s_{\deltat^2}^0\|
  \end{align}
  Since $F$ is bounded, we know that $\|s_{\deltat^2}^{k} - s_{\deltat^2}^{k-1}\| \leq C\deltat$. Therefore:
  \begin{align}
    \|\left(\sum_{k=1}^{\lfloor 1/\deltat\rfloor} f(s_{\deltat^2}^{k-1})\deltat\right) - f(s_{\deltat^2}^0) \| &\leq \deltat L_fC\sum_{k=1}^{\lfloor 1/\deltat\rfloor} k\deltat \\
    &= \bigO(\deltat^2) 
  \end{align}
  Therefore: 
  \begin{align}
  s_{\deltat^2}^{\lfloor 1/\deltat \rfloor} =  s_{\deltat^2}^0 + f(s_{\deltat^2}^0) \deltat + \xi + \bigO(\deltat^2) 
  \end{align}

  Therefore, we have $a>0$ such that $\|s_{\deltat^2}^{\lfloor 1/\deltat \rfloor} -  s_{\deltat^2}^0 - f(s_{\deltat^2}^0) \deltat\| \leq \|\xi\| + a\deltat^2$
  
We define $(\tilde s_\deltat)$ the deterministic near-continuous process with time discretization $\deltat$ defined by $\tilde s_\deltat^{k+1} \deq s_\deltat^k + f(s_\deltat^k)\deltat$. We have:
\begin{align}
  \|s_{\deltat^2}^{(k+1)\lfloor 1/\deltat\rfloor}  &- \tilde{s}_\deltat^{k+1}\| \nonumber \\&\leq \|s_{\deltat^2}^{(k+1)\lfloor 1/\deltat\rfloor} - s_{\deltat^2}^{k\lfloor 1/\deltat\rfloor} - f(s_{\deltat^2}^{k\lfloor 1/\deltat\rfloor})\deltat\| \nonumber \\&+ \|s_{\deltat^2}^{k\lfloor 1/\deltat\rfloor} + f(s_{\deltat^2}^{k\lfloor 1/\deltat\rfloor})\deltat - \tilde{s}_\deltat^{k+1}\|
\end{align}
We know that
\begin{equation}
\|s_{\deltat^2}^{(k+1)\lfloor 1/\deltat\rfloor} - s_{\deltat^2}^{k\lfloor 1/\deltat\rfloor} - f(s_{\deltat^2}^{k\lfloor 1/\deltat\rfloor})\deltat\| \leq \|\xi_k\| + a\deltat^2
\end{equation}
Moreover:
\begin{align}
  \nonumber \|s_{\deltat^2}^{k\lfloor 1/\deltat\rfloor} &+ f(s_{\deltat^2}^{k\lfloor 1/\deltat\rfloor})\deltat - \tilde{s}_\deltat^{k+1}\|  \\ &\leq \|s_{\deltat^2}^{k\lfloor 1/\deltat\rfloor} - \tilde{s}_\deltat^{k} \| + \deltat\|f(s_{\deltat^2}^{k\lfloor 1/\deltat\rfloor}) - f(\tilde{s}_\deltat^{k})\| \\
  &\leq (1+L_f\deltat)\|s_{\deltat^2}^{k\lfloor 1/\deltat\rfloor} - \tilde{s}_\deltat^{k} \| 
\end{align}
Therefore, we have:
\begin{align}
 \|s_{\deltat^2}^{(k+1)\lfloor 1/\deltat\rfloor}  &- \tilde{s}_\deltat^{k+1}\| \nonumber \\ &\leq \|\xi_k\| + a\deltat^2 + (1+L_f\deltat)\|s_{\deltat^2}^{k\lfloor 1/\deltat\rfloor}  - \tilde{s}_\deltat^{k}\|  
\end{align}
By induction, and by taking $k = \lfloor t/\deltat\rfloor$:
\begin{align}
  \|s_{\deltat^2}^{k\lfloor 1/\deltat\rfloor}  - \tilde{s}_\deltat^{k}\| \leq \frac{a\deltat}{L_f}\exp(L_ft) + \sum_{j=0}^{\lfloor t/\deltat \rfloor}(1+\deltat L_f)^j\|\xi_j\|  
\end{align}

Therefore, if $\varepsilon >0$, we have :
\begin{align}
  \BP&\left(\|s_{\deltat^2}^{k\lfloor 1/\deltat\rfloor}  - \tilde{s}_\deltat^{k}\| > \varepsilon \right) \\ &\leq \BP\left(\sum_{j=0}^{\lfloor t/\deltat \rfloor}(1+\deltat L_f)^j\|\xi_j\| > \varepsilon - \frac{a\deltat}{L_f}\exp(L_ft)\right) \\
                                                                                            &\leq \frac{\E\left[\sum_{j=0}^{\lfloor t/\deltat \rfloor}(1+\deltat L_f)^j\|\xi_j\|\right]}{\varepsilon - \frac{a\deltat}{L_f}\exp(L_ft)}\\
                                                                                            &\leq \frac{\E\left[\|\xi\|\right]}{\varepsilon - \frac{a\deltat}{L_f}\exp(L_ft)}\frac{\exp(L_ft)}{L_f\deltat}\\
\end{align}
But $\E\left[\|\xi\|\right] \leq \sqrt{\E\left[\|\xi\|^2\right]} \leq \sigma \deltat^{3/2}$. Therefore, we have:
\begin{align}
  \BP\left(\|s_{\deltat^2}^{\lfloor t/\deltat\rfloor\lfloor 1/\deltat\rfloor}  - \tilde{s}_\deltat^{k}\| > \varepsilon \right) & = \bigO(\sqrt \deltat)
\end{align}

Therefore, the  process $t \mapsto s_{\deltat^2}^{\lfloor t/\deltat\rfloor\lfloor 1/\deltat\rfloor}$ converges in probability to $\tilde s$. Furthermore, by a similar argument than in Lemma 1, we know that the discretized process $\tilde s$ converge to the continuous process defined by $\frac{ds}{dt} = f(s_t)$. We can conclude ou result.


\end{proof}
\section{Implementation details}
All the details specifying our implementation are given in this section. We
first give precise pseudo code descriptions for both \emph{Continuous Deep
Advantage Updating} (Alg.~\ref{alg:cdau}), as well as the variants of DDPG
(Alg.~\ref{alg:ddpg}) and DQN (Alg.~\ref{alg:dqn}) used.

\begin{algorithm}

\begin{algorithmic}
	\STATE \textbf{Inputs:}
	\STATE $\theta$, $\psi$ and $\phi$, parameters of
	$V_{\theta}$, $\bar{A}_{\psi}$ and $\pi_\phi$.
	\STATE $\pi^{\text{explore}}$ and $\nu_\deltat$ defining an exploration policy.
	\STATE \textbf{opt}$_V$, \textbf{opt}$_A$, \textbf{opt}$_\pi$, $\alpha^V \deltat$, $\alpha^A \deltat$ and $\alpha^\pi\deltat$, optimizers and learning rates.
	\STATE $\mathcal{D}$, buffer of transitions $(s, a, r, d, s')$, with $d$ the episode termination signal.
	\STATE $\deltat$ and $\gamma$, time discretization and discount factor.
	\STATE \textbf{nb\_epochs} number of epochs.
	\STATE \textbf{nb\_steps}, number of steps per epoch.
	\STATE
	\STATE Observe initial state $s^0$
	\STATE $t \gets 0$
	\FOR {$e=0, \textbf{nb\_epochs}$}
	\FOR {$j=1, \textbf{nb\_steps}$}
	\STATE $a^k \leftarrow \pi^{\text{explore}}(s^k, \nu^k_\deltat)$.
	\STATE Perform $a^k$ and observe $(r^{k+1}, d^{k+1}, s^{k+1})$.
	\STATE Store $(s^k, a^k, r^{k+1}, d^{k+1}, s^{k+1})$ in $\mathcal{D}$.
	\STATE $k \gets k + 1$
	\ENDFOR
	\FOR {$k=0, \text{nb\_learn}$}
	\STATE \text{Sample a batch of $N$ random transitions from $\mathcal{D}$}
	\STATE $Q^i \gets V_{\theta}(s^i) + \deltat\hspace{-.17em}\left(
	\bar{A}_{\psi}(s^i, a^i) - \bar{A}_{\psi}(s^i, \pi_\phi(s^i))\right)$
	\STATE $\tilde{Q^i} \gets r^i\deltat + (1 - d^i) \gamma^{\deltat} V_{\theta}(s'^i)$
	\STATE $\Delta \theta \gets \frac{1}{N}\sum\limits_{i=1}^N  \frac{\left(Q^i - \tilde{Q^i}\right)\partial_{\theta} V_{\theta}(s^i)}{\deltat}$
	\STATE $\Delta \psi \gets \frac{1}{N}\sum\limits_{i=1}^N \frac{\left(Q^i - \tilde{Q^i}\right)\partial_{\psi} \left(\bar{A}_{\psi}(s^i, a^i) - \bar{A}_{\psi}(s^i, \pi_\phi(s^i))\right) }{\deltat}$
	\STATE $\Delta \phi \gets \frac{1}{N} \sum\limits_{i=1}^N \partial_a \bar{A}_\psi(s^i, \pi_\phi(s^i)) \partial_\phi \pi_\phi(s^i)$
	\STATE Update $\theta$ with \textbf{opt}$_V$, $\Delta \theta$ and learning rate $\alpha^V \deltat$.
	\STATE Update $\psi$ with \textbf{opt}$_A$, $\Delta \psi$ and learning rate $\alpha^A \deltat$.
	\STATE Update $\phi$ with \textbf{opt}$_\pi$, $\Delta \phi$ and learning rate $\alpha^\pi \deltat$.
	\ENDFOR
	\ENDFOR
\end{algorithmic}

	\caption{Continuous DAU}
	\label{alg:cdau}
\end{algorithm}
\begin{algorithm}

\begin{algorithmic}
	\STATE \textbf{Inputs:}
	\STATE $\psi$ and $\phi$, parameters of
	$Q_\psi$ and $\pi_\phi$.
	\STATE $\psi'$ and $\phi'$, parameters of target networks
	$Q_{\psi'}$ and $\pi_{\phi'}$.
	\STATE $\pi^{\text{explore}}$ and $\nu$ defining an exploration policy.
	\STATE \textbf{opt}$_Q$, \textbf{opt}$_\pi$, $\alpha^Q$ and $\alpha^\pi$, optimizers and learning rates.
	\STATE $\mathcal{D}$, buffer of transitions $(s, a, r, d, s')$, with $d$ the episode termination signal.
	\STATE $\gamma$ discount factor.
	\STATE $\tau$ target network update factor.
	\STATE \textbf{nb\_epochs} number of epochs.
	\STATE \textbf{nb\_steps}, number of steps per epoch.
	\STATE
	\STATE Observe initial state $s^0$
	\STATE $t \gets 0$
	\FOR {$e=0, \textbf{nb\_epochs}$}
	\FOR {$j=1, \textbf{nb\_steps}$}
	\STATE $a^k \leftarrow \pi^{\text{explore}}(s^k, \nu^k)$.
	\STATE Perform $a^k$ and observe $(r^{k+1}, d^{k+1}, s^{k+1})$.
	\STATE Store $(s^k, a^k, r^{k+1}, d^{k+1}, s^{k+1})$ in $\mathcal{D}$.
	\STATE $k \gets k + 1$
	\ENDFOR
	\FOR {$k=0, \text{nb\_learn}$}
	\STATE \text{Sample a batch of $N$ random transitions from $\mathcal{D}$}
	\STATE $\tilde{Q^i} \gets r^i + (1 - d^i) \gamma Q_{\psi'}(s'^i, \pi_{\phi'}(s'^i))$
	\STATE $\Delta \psi \gets \frac{1}{N}\sum\limits_{i=1}^N \left(Q^i - \tilde{Q^i}\right) \partial_\psi Q(s^i, a^i)$
	\STATE $\Delta \phi \gets \frac{1}{N} \sum\limits_{i=1}^N \partial_a Q_\psi(s^i, \pi_\phi(s^i)) \partial_\phi \pi_\phi(s^i)$
	\STATE Update $\psi$ with \textbf{opt}$_Q$, $\Delta \psi$ and learning rate $\alpha^Q$.
	\STATE Update $\phi$ with \textbf{opt}$_\pi$, $\Delta \phi$ and learning rate $\alpha^\pi$.
	\STATE $\psi' \gets \tau \psi' + (1 - \tau) \psi$
	\STATE $\phi' \gets \tau \phi' + (1 - \tau) \phi$
	\ENDFOR
	\ENDFOR
\end{algorithmic}

	\caption{DDPG}
	\label{alg:ddpg}
\end{algorithm}
\begin{algorithm}

\begin{algorithmic}
	\STATE \textbf{Inputs:}
	\STATE $\psi$ parameter of
	$Q_\psi$.
	\STATE $\psi'$, parameters of target networks
	$Q_{\psi'}$.
	\STATE $\pi^{\text{explore}}$ and $\nu$ defining an exploration policy.
	\STATE \textbf{opt}$_Q$, $\alpha^Q$ optimizer and learning rate.
	\STATE $\mathcal{D}$, buffer of transitions $(s, a, r, d, s')$, with $d$ the episode termination signal.
	\STATE $\gamma$ discount factor.
	\STATE $\tau$ target network update factor.
	\STATE \textbf{nb\_epochs} number of epochs.
	\STATE \textbf{nb\_steps}, number of steps per epoch.
	\STATE
	\STATE Observe initial state $s^0$
	\STATE $t \gets 0$
	\FOR {$e=0, \textbf{nb\_epochs}$}
	\FOR {$j=1, \textbf{nb\_steps}$}
	\STATE $a^k \leftarrow \pi^{\text{explore}}(s^k, \nu^k)$.
	\STATE Perform $a^k$ and observe $(r^{k+1}, d^{k+1}, s^{k+1})$.
	\STATE Store $(s^k, a^k, r^{k+1}, d^{k+1}, s^{k+1})$ in $\mathcal{D}$.
	\STATE $k \gets k + 1$
	\ENDFOR
	\FOR {$k=0, \text{nb\_learn}$}
	\STATE \text{Sample a batch of $N$ random transitions from $\mathcal{D}$}
	\STATE $\tilde{Q^i} \gets r^i + (1 - d^i) \gamma \max\limits_{a'}Q_{\psi'}(s'^i, a')$
	\STATE $\Delta \psi \gets \frac{1}{N}\sum\limits_{i=1}^N \left(Q^i - \tilde{Q^i}\right) \partial_\psi Q(s^i, a^i)$
	\STATE Update $\psi$ with \textbf{opt}$_Q$, $\Delta \psi$ and learning rate $\alpha^Q$.
	\STATE $\psi' \gets \tau \psi' + (1 - \tau) \psi$
	\ENDFOR
	\ENDFOR
\end{algorithmic}

	\caption{DQN}
	\label{alg:dqn}
\end{algorithm}

For DDPG and DQN, two different settings were experimented with:
\begin{itemize}
	\item One with time discretization scalings, to keep the comparison
		fair. In this setting, the discount factor is still scaled as $\gamma^\deltat$,
		rewards are scaled as $r \deltat$, and learning rates are scaled to obtain parameter
		updates of order $\deltat$. As RMSprop is used for all experiments, this amounts
		to using a learning rate scaling as $\alpha^Q = \tilde{\alpha}^Q \deltat$,
		$\alpha^\pi = \tilde{\alpha}^\pi \deltat$.
	\item One without discretization scalings. In that case, only the discount factor is scaled
		as $\gamma^\deltat$, to prevent unfair shortsightedness. All other
		parameters are set with a reference $\deltat_0 = 1e-2$. For instance,
		for all $\deltat$'s, the reward perceived is $r * \deltat_0$, and
		similarily for learning rates, $\alpha^Q = \tilde{\alpha}^Q
		\deltat_0$, $\alpha^\pi = \tilde{\alpha}^Q \deltat_0$. These scalings
		don't depend on the discretization, but perform decently at least for
		the highest discretization.
\end{itemize}
\subsection{Global hyperparameters}
The following hyperparameters are maintained constant throughout all our experiments,
\begin{itemize}
	\item All networks used are of the form
		\begin{verbatim}
		Sequential(
		    Linear(nb_inputs, 256),
		    LayerNorm(256),
		    ReLU(),
		    Linear(256, 256),
		    LayerNorm(256),
		    ReLU(),
		    Linear(256, nb_outputs)
		).
		\end{verbatim}
	Policy networks have an additional $\tanh$ layer to constraint action range. On certain
	environments, network inputs are normalized by applying a mean-std normalization, with
	mean and standard deviations computed on each individual input features, on all previously
	encountered samples.
	\item $\mathcal{D}$ is a cyclic buffer of size $1000000$.
	\item $\textbf{nb\_steps}$ is set to $10$, and $256$ environments are run in parallel to
		accelerate the training procedure, totalling $2560$ environment interactions between
		learning steps.
	\item $\textbf{nb\_learn}$ is set to $50$.
	\item The physical $\gamma$ is set to $0.8$. It is always scaled as $\gamma^\deltat$ (even for
		unscaled DQN and DDPG).
	\item $N$, the batch size is set to $256$.
	\item RMSprop is used as an optimizer without momentum, and with
		$\alpha=1 - \deltat$ (or $1 - \deltat_0$ for unscaled DDPG and
		DQN).
	\item Exploration is always performed as described in the main text. The OU process used as
		parameters $\kappa = 7.5$, $\sigma = 1.5$.
	\item Unless otherwise stated, $\alpha_1 \deq \tilde{\alpha}^Q = \alpha^V = \alpha^A = 0.1$, $\alpha_2 \deq \tilde{\alpha}^\pi =
		\alpha^\pi = 0.03$.
	\item $\tau = 0.9$
\end{itemize}
\subsection{Environment dependent hyperparameters}
We hereby list the hyperparameters used for each environment. Continuous actions environments are marked with a
(C), discrete actions environments with a (D).
\begin{itemize}
	\item {\bf Ant (C)}: State normalization is used. Discretization range: $[0.05, 0.02, 0.01, 0.005, 0.002]$.
	\item {\bf Cheetah (C)}: State normalization is used. Discretization range: $[0.05, 0.02, 0.01, 0.005, 0.002]$
	\item {\bf Bipedal Walker (C)\footnote{
				The reward for Bipedal Walker is modified not to scale with $\deltat$. This does not introduce any change for the default setup.
		}}: State normalization is used, $\alpha_2 = 0.02$. Discretization range: $[0.01, 0.005, 0.002, 0.001]$.
	\item {\bf Cartpole (D)}: $\alpha_2 = 0.02$, $\tau = 0$. Discretization range: $[0.01, 0.005, 0.002, 0.001, 0.0005]$.
	\item {\bf Pendulum (C)}: $\alpha_2 = 0.02$, $\tau = 0$. Discretization range: $[0.01, 0.005, 0.002, 0.001, 0.0005]$.

\end{itemize}

\section{Additional results}
Additional results mentionned in the text are presented in this section.
\newcommand{\cw}{\textwidth}
\begin{figure*}[h]
	\includegraphics[width=\cw]{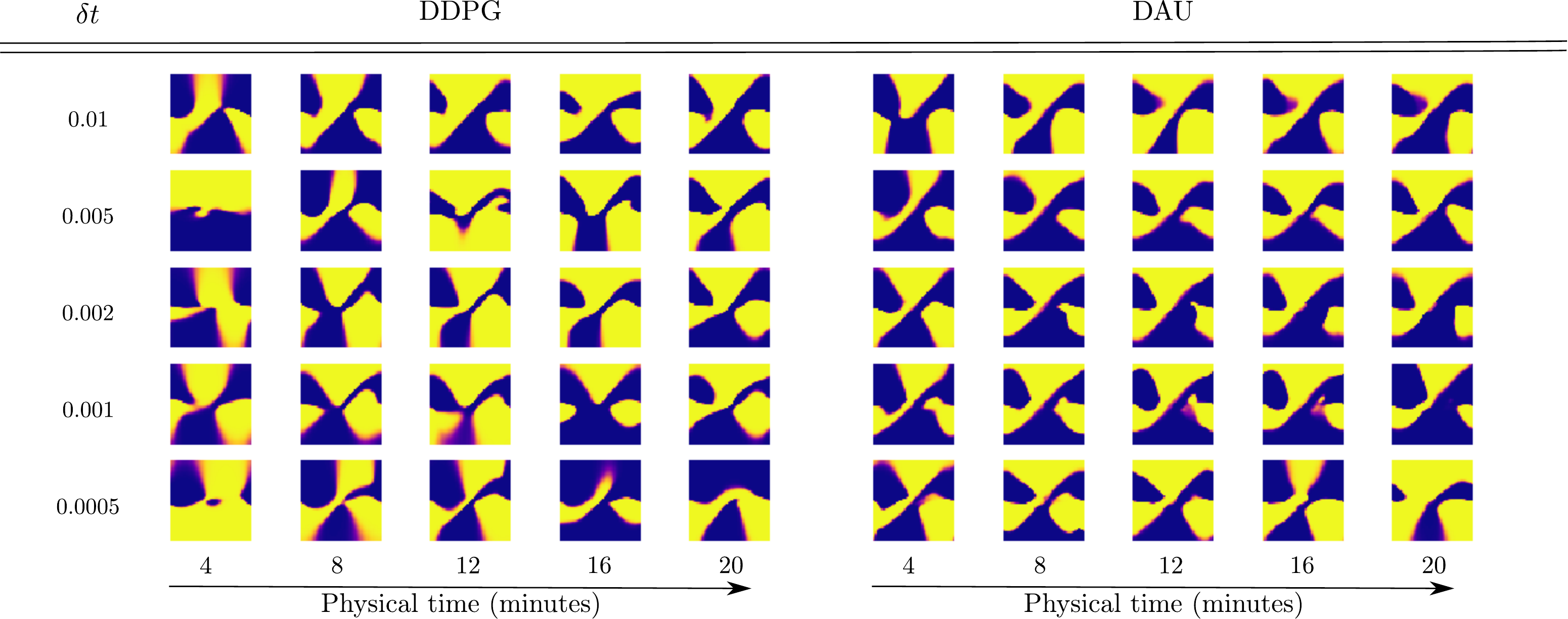}
	\caption{Policies obtained by DDPG (unscaled version) and AU at different instants in physical time of training on the pendulum swing-up environment. Each image represents the policy learnt by the policy network, with $x$-axis representing angle, and $y$-axis angular velocity. The lighter the pixel, the closer to $1$ the action, the darker, the closer to $-1$.}
	\label{fig:pend1}
\end{figure*}
\begin{figure*}[h]
	\includegraphics[width=\cw]{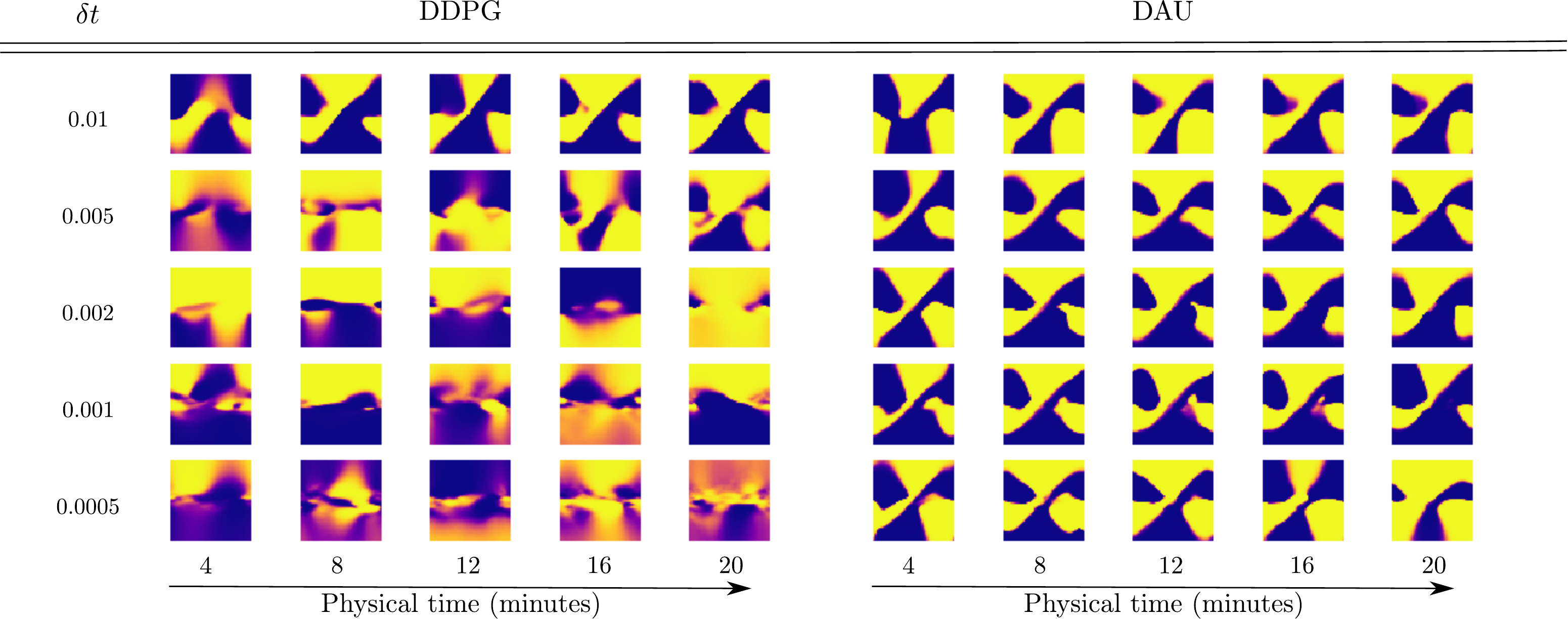}
	\caption{Policies obtained by DDPG (scaled version) and AU at different instants in physical time of training on the pendulum swing-up environment. Each image represents the policy learnt by the policy network, with $x$-axis representing angle, and $y$-axis angular velocity. The lighter the pixel, the closer to $1$ the action, the darker, the closer to $-1$.}
	\label{fig:pend2}
\end{figure*}
\begin{figure*}[h]
	\includegraphics[width=\cw]{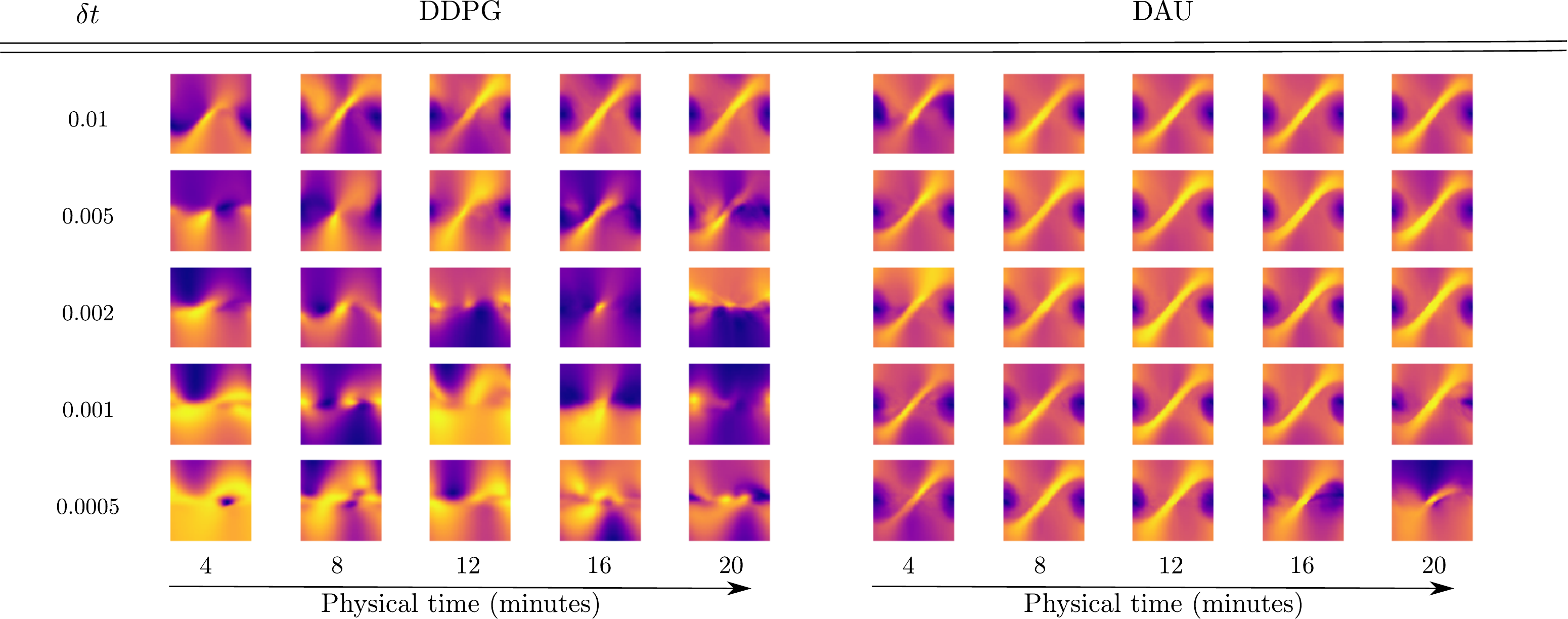}
	\caption{Value functions obtained by DDPG (scaled version) and AU at different instants in physical time of training on the pendulum swing-up environment. Each image represents the value function learnt, with $x$-axis representing angle, and $y$-axis angular velocity. The lighter the pixel, the higher the value.}
	\label{fig:pend3}
\end{figure*}
\begin{figure*}[h]
	\centering
	\includegraphics[width=\cw]{figs_data/fig_unscaled/full_results_unscaled_lq.png}
	\label{fig:full_results}
	\caption{Learning curves for DAU and DDPG (scaled) on classic control benchmarks for  various time discretization $\deltat$: Scaled return as a function of the physical time spent in the environment.}
\end{figure*}


\end{document}